\definecolor{mygray}{gray}{0.4}
\newcommand{\stkout}[1]{\ifmmode\text{\sout{\ensuremath{#1}}}\else\sout{#1}\fi}
\theoremstyle{plain}
\newtheorem{theorem}{Theorem}[section]
\newtheorem{proposition}[theorem]{Proposition}
\newtheorem{lemma}[theorem]{Lemma}
\newtheorem{corollary}[theorem]{Corollary}
\theoremstyle{definition}
\newtheorem{definition}[theorem]{Definition}
\newtheorem{assumption}{Assumption}
\newtheorem*{theorem*}{Theorem}
\theoremstyle{remark}
\def\eqref#1{eqn~\ref{#1}}
\def\Eqref#1{Eqn~\ref{#1}}
\def\1{\bm{1}}
\newcommand{\plusminus}[1] { \scriptsize{$\pm$ #1}}
\DeclareMathAlphabet{\mathsfit}{\encodingdefault}{\sfdefault}{m}{sl}
\SetMathAlphabet{\mathsfit}{bold}{\encodingdefault}{\sfdefault}{bx}{n}
\def\gS{{\mathcal{S}}}
\def\gX{{\mathcal{X}}}
\def\gY{{\mathcal{Y}}}
\def\gZ{{\mathcal{Z}}}
\newcommand{\R}{\mathbb{R}}
\newcommand{\Dcal}{\mathcal{D}_{cal}}
\newcommand{\Dtrain}{\mathcal{D}_{train}}
\title{Conformal Prediction for Time-series Forecasting with Change Points}
\author{%
Sophia Sun, Rose Yu \\
 Computer Science and Engineering\\
 University of California, San Diego\\
  }
\begin{document}

\maketitle
\vspace{-0.5em}
\begin{abstract} Conformal prediction has been explored as a general and efficient way to provide uncertainty quantification for time series. However, current methods struggle to handle time series data with change points — sudden shifts in the underlying data-generating process. In this paper, we propose a novel Conformal Prediction for Time-series with Change points (CPTC) algorithm, addressing this gap by integrating a model to predict the underlying state with online conformal prediction to model uncertainties in non-stationary time series. We prove CPTC's validity and improved adaptivity in the time series setting under minimum assumptions, and demonstrate CPTC's practical effectiveness on 6 synthetic and real-world datasets, showing improved validity and adaptivity compared to state-of-the-art baselines. Our code is available at \url{https://github.com/Rose-STL-Lab/CPTC}. 
\end{abstract}

\vspace{-0.5em}
\section{Introduction}

Uncertainty Quantification (UQ) is a key building block of reliable machine learning systems. Conformal prediction (CP) is a popular distribution-free UQ method that provides finite sample coverage guarantees without placing assumptions on the underlying data generating process \citep{vovk2005algorithmic, lei2018distribution}. Because of its simplicity and generality, conformal prediction has seen wide adoption in many tasks, ranging from  healthcare \cite{angelopoulos2022conformal},  robotics \cite{lindemann2023safe}, to validating the factuality of  generative models \cite{cherian2024large}.

Existing CP algorithms for time series forecasting mostly adopt an online learning framework \cite{gibbs2024conformal, angelopoulos2024pid}, where the prediction interval adapts to distribution changes in data \textit{reactively}. The online CP algorithms achieve a asymptotic marginal coverage guarantee (the miscoverage rate converges to a specified fraction as the time horizon goes to infinity). As a result, the algorithms inevitably exhibit miscoverage when distribution shifts occur, and then have to over- or under-cover in later timesteps to compensate. This behavior is evident in the ACI and CP baselines in Figure \ref{fig:cov}. This is undesirable in practice as these under-covered periods may incur a lot of risk.

This paper builds upon the observation that in real-world scenarios, distribution shifts in time-series are often \textit{predictable}. Take for example the task of forecasting electricity demands: we know that the hidden dynamics may differ between day and night, or during weekdays and weekends. The same logic applies to traffic forecasting and product demand forecasting, where there are patterns in the shifts between ``surge times" and ``normal times". In this work, we study time series data that exhibit this type of abrupt shifts, or change points, in the underlying generative process. 

State Space Models (SSMs) \citep{kalman1960new, aoki2013state} are a powerful tool for modeling time series data, as they provide a structured framework to capture the underlying temporal dependencies through latent states. In particular, Switching Dynamical Systems (SDS) \citep{ackerson1970state} is a type of SSM with an additional set of latent variables (known as \textit{switches}) that represent the operating mode active at the current timestep. SDS introduces the inductive biases that allow SSMs to switch between a discrete set of dynamics, which can be learned from training data. It is shown to be a flexible, robust, and interpretable representation of time series with varying dynamics exhibiting change points \citep{ansari2021deep, chen2022learning}. SDS is deployed not only for energy \cite{karakatsani2008intra, song2014short, fan2008machine} and traffic \cite{yu2004switching, cetin2006short} forecasting as  examples given above, but also on a wide range of applications including neuroscience \cite{glaser2020recurrent, rabinovich2006dynamical, kato2015global}, engineering \cite{strachan2011measuring,wang2007actively}, and sports analytics \cite{yamamoto2018switching}, all would benefit from robust and calibrated uncertainty quantification. 

Our contributions are:
\begin{figure}[tb]
    \vspace{-1em}
    \centering
    \includegraphics[width=1.01\linewidth]{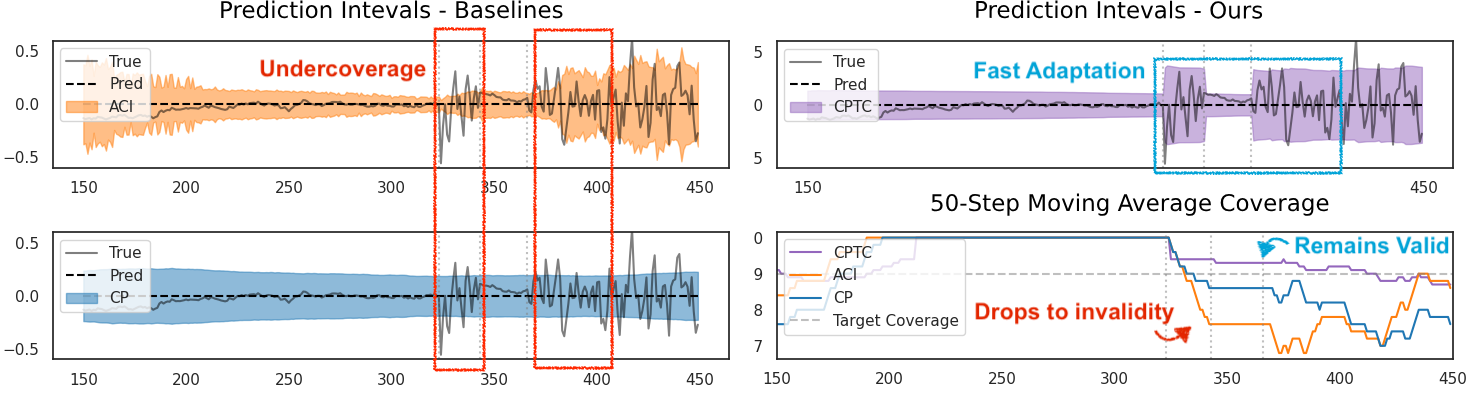}
    \caption{Comparison of the prediction intervals obtained by our algorithm CPTC (purple) against online conformal prediction baselines on synthetic data. The vertical dashed line marks the distribution shifts; ideal behavior is consistent coverage at the horizontal dashed line in the final panel. Bottom right panel shows that CPTC achieves fast adaptation and remains valid when change points occur.}
    \label{fig:cov}
    \vspace{-1em}

\end{figure}

%
\vspace{-0.5em}
\begin{itemize}
    \item We propose a new algorithm \textbf{C}onformal \textbf{P}rediction for \textbf{T}ime series with \textbf{C}hange points (CPTC) that utilizes state transition predictions to improve uncertainty quantification for time-series forecasts. Leveraging properties of a SDS model, CPTC consolidates multiple future forecasts to adaptively adjust its prediction intervals when underlying dynamics shift.
    \item We prove that CPTC achieves asymptotic valid coverage, \textit{without assumptions} on the data generation process or state transition model accuracy. When predicted state transitions align well with distribution shifts, CPTC can anticipate uncertainty and adapt faster.
    \item We show strong empirical results on 3 synthetic and 3 real-world datasets. Compared to state-of-the-art CP baselines, CPTC achieves more robust coverage with comparable prediction intervals sharpness (example in Figure \ref{fig:cov}), and is computationally light. 
\end{itemize}



\vspace{-0.5em}
\section{Related Work}



\vspace{-0.5em}
\paragraph{Probabilistic Forecasting for Time-Series with Change Points.} 

Probabilistic forecasting has become central to modern time-series analysis by modeling a distribution over future outcomes rather than a single-point forecast \citep{benidis2022deep}. Classic approaches often adopt Bayesian or approximate Bayesian strategies for uncertainty quantification: for instance, Bayesian Neural Networks (BNNs) \citep{wang2019deep, wu2021quantifying, gal2017concrete} or neural process models \citep{garnelo2018conditional, sun2024data} use neural networks to parameterize the underlying stochastic process. Other methods, such as DeepAR \citep{salinas2020deepar}, PatchTST \citep{nie2023patchtst}, or Temporal Fusion Transformer \citep{lim2021temporal}, generate probabilistic forecasts directly by minimizing a pinball loss (i.e., quantile loss), thereby learning predictive distributions.


A key challenge arises when \textit{change points} occur - i.e., abrupt distribution shifts in the time series \citep{montanez2015inertial}. Change point detection and segmentation have been studied extensively (beyond our scope, see \citep{aminikhanghahi2017survey} for a recent survey). Modeling time series with change points typically features state-space models (SSM), and in particular switching dynamical systems (SDS) \citep{johnson2016composing, ansari2021deep, liu2023graph}. These SDS approaches use variational inference to fit neural parameterizations of transitions and emission distributions. However, existing probabilistic forecasters often lack strict calibration guarantees and can be overconfident in practice \citep{lakshminarayanan2017simple}, and many require specialized architecture tuning for different tasks, limiting their applicability to real-world UQ challenges.

\vspace{-0.5em}
\paragraph{Conformal Prediction for time-series.} 

Conformal prediction (CP) \citep{vovk2005algorithmic} has emerged as a leading framework for UQ due to its algorithmic simplicity, broad applicability, and finite-sample coverage guarantees; see \cite{angelopoulos2021gentle} for a comprehensive introduction. For time-series forecasting, some works focus on joint coverage over fixed-length horizons \citep{Schaar2021conformaltime, sun2023copula, zhou2024heterogeneous}; others assume stable underlying dynamics and adapt only to shifts in residual distributions \citep{xu2021conformal} . \cite{tibshirani2019conformal} uses known covariate shifts (by propensity weighting) to achieve theoretical coverage, but can fail in settings where the shift mechanism is unknown, which is typical for time series forecasting. \cite{hajihashemi2024multi} introduces a multi-model ensemble CP framework, optimizing model selection within a pool of experts for combined coverage.  CP has also been extended for change-point detection in time series via conformal martingales \citep{vovk2021retrain, volkhonskiy2017inductive}. 

Recent works on online conformal prediction develop online strategies that guarantee marginal coverage for adversarial or nonstationary data streams. ACI \citep{gibbs2021adaptive}, AgACI \citep{zaffran2022adaptive}, DtACI \citep{gibbs2024conformal}, and Conformal PID control \citep{angelopoulos2024pid}  leverage online optimization and have a "reaction" period when distribution shifts. Multivalid Prediction (MVP) \citep{bastani2022practical} provides group coverage guarantees and learns a calibration threshold online, although it can require longer calibration windows. Closer to our work, SPCI \citep{xu2023sequential} learns an autocorrelative estimation of the residual's quantiles through Quantile Random Forest; whereas HopCPT \citep{auer2023hopcpt} condition their conformal interval on similar parts of the time series using a Modern Hopfield Network. The drawback of these two methods is that the regression models are trained during inference time, which (1) requires a long cold start window to learn meaningful associations and (2) is computationally expensive. 
Our contribution is a CP algorithm that is computationally light during inference time, and can leverage state prediction capabilities within the SDS model to anticipate and adapt quickly to distribution changes.



\vspace{-0.5em}
\section{Background}

\subsection{Conformal Prediction}

We briefly review the algorithm and guarantees for conformal prediction, and refer readers to~\cite{angelopoulos2021gentle} for a thorough introduction. In this paper CP refers to \textit{split conformal prediction} (we consider full conformal prediction out of scope, because its computationally complexity renders it nonviable for deep learning settings). The goal of conformal prediction is to produce a \textit{valid} prediction interval (Def.~\ref{def:validity}) for any underlying prediction model.
\begin{definition}[Validity]
Given a new data point $(x,y)$, where $y$ is the prediction target and $x$ is the covariates,  and a desired confidence $1-\alpha \in (0,1)$, the prediction interval $\Gamma^{1-\alpha}(x)$ is a subset of $\mathcal{Y}$ containing probable outputs given $x$. The region $\Gamma^{1-\alpha}$ is valid if
\begin{equation}\label{eq:coverage}
P (y \in \Gamma^{1-\alpha}(x)) \geq 1-\alpha
 \end{equation}
\label{def:validity}
\end{definition}
\vspace{-2em}
Let $\mathcal{D} = \{ (x^i, y^i)\}_{i=1}^n$ be a dataset whose data points $(x^i, y^i)$ are sampled from a distribution on $ \mathcal{X} \times \mathcal{Y}$.  Conformal prediction splits the dataset into a proper training set $\Dtrain$ and a calibration set $\Dcal$. A prediction model $\hat{f}: \mathcal{X} \rightarrow  \mathcal{Y}$ is trained on $\Dtrain$. We use a \textit{nonconformity score} function $A: \mathcal{X} \times  \mathcal{Y} \rightarrow \R_{\geq 0}$ to quantify how well a data sample from calibration \textit{conforms} to the training dataset. Typically, we choose a metric of disagreement between the prediction and the true label as the non-conformity score, such as the Euclidean distance: 
\begin{equation}
 A(x,y) \stackrel{\text{e.g.}}{=} d(y,\hat{f}(x)) \stackrel{\text{e.g.}}{=} 
\|y - \hat{f}(x)\|_2 
\label{def:non-c}
\end{equation}
Let $\gS = \{A(x^i, y^i)\}_{(x^i, y^i) \in \Dcal}$ denote the set of nonconformity scores of all samples in the calibration set $\Dcal$. 
During inference time given a new data  $x^{n+1}$, the conformal prediction interval is constructed as in \Eqref{eq:region}, where $Q$ is the empirical quantile function. 
\begin{equation}
    \Gamma^{1-\alpha}(x^{n+1}):=  \{ y : A(x^{n+1}, y) \leq Q^{1-\alpha}(\gS \cup \{\infty\})\}  
    \label{eq:region}
\end{equation}
%
We say a sample is \textit{covered} if the true value lies in the prediction interval $y^{n+1} \in \Gamma^{1-\alpha}(x^{n+1})$ 
Conformal prediction is guaranteed to produce valid prediction intervals \citep{vovk2005algorithmic} if the calibration data and test data are exchangeable (in a dataset $\{(x^i, y^i)\}_{i=1}^n$ of size  $n$, any of its $n$! permutations are equally probable).

Conformal prediction has been extended to the online setting. More precisely, the algorithm observes $(x_1, y_1), (x_2, y_2), \ldots$ sequentially, and needs to construct a prediction set $ \Gamma^{1-\alpha}_t$ for $y_t$ at time step $t$. Without making any distributional assumptions on the data generation process, online CP algorithms achieve asymptotic guarantee as \Eqref{eq:aci_convergence}, where $\mathbbm{1}\{\cdot\}$ is the indicator function:
\vspace{-0.5em}
\begin{equation}
\lim_{T \to \infty} \frac{1}{T} \sum_{t=1}^T \mathbbm{1}\{ y_t \notin \Gamma^{1-\alpha}_t(x_t) \} = \alpha  \label{eq:aci_convergence}
\end{equation}

\vspace{-1em}
We refer readers to \cite{angelopoulos2024theoretical} for a summary of theoretical results in this setting.

\subsection{Switching Dynamical Systems for Modeling Time-series with Change Points}
\label{background:sds}

\begin{figure}
    \centering
     \vspace{-2em}
    \includegraphics[width=\linewidth]{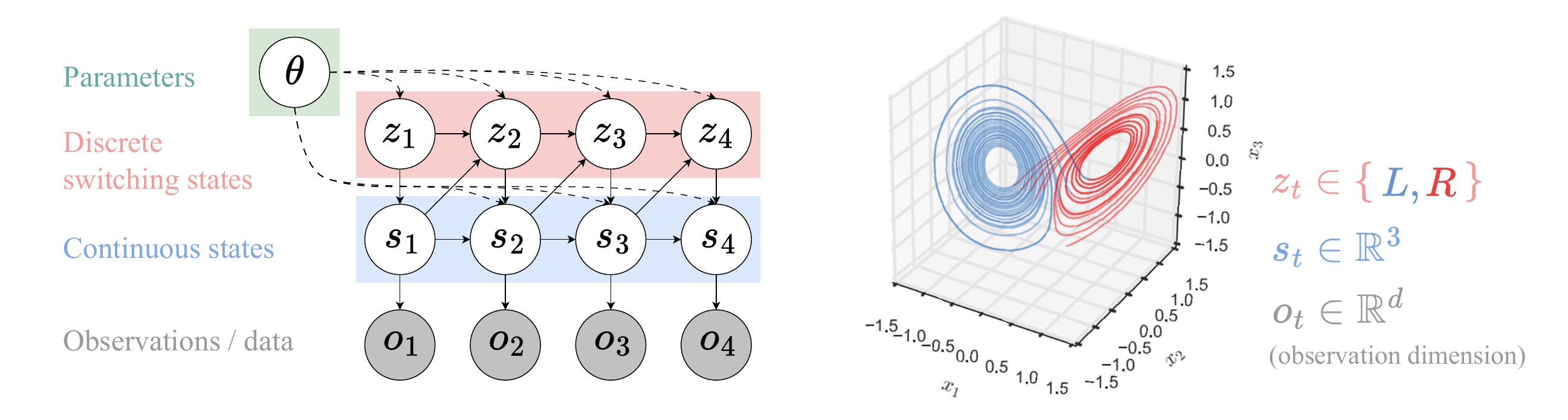}   \vspace{-1em}
	\caption{(Left) Generative model of the SDS dynamics as in \Eqref{eq:sds}. Shaded circles represent observed  variables; hollow circles are latent variables. (Right) Example to illustrate notation. We show a Lorenz attractor, a canonical nonlinear dynamical system, approximated by a linear SDS \cite{linderman2023slds}.}
	\label{fig:sds_figure}
         \vspace{-1.5em}
\end{figure}

Switching dynamics systems (SDS), or mixed dynamics systems, provide a powerful framework for modeling time series with change points, where the underlying dynamics shift between distinct regimes. SDS assumes the system consists of a total of $K$ different base dynamics. It explicitly incorporates discrete switching \textit{states} to represent the abrupt shifts among base dynamics. We denote the discrete switching states at timestep $t$ as $z_t \in \mathcal{Z}$, $ \mathcal{Z} = \{1, \dots, K\} \subset \mathbb{Z}^+$, to index one of $K$ base dynamical systems, and represent the observed sequence as ${o_1, \ldots, o_T }$ or $o_{1:T}$ for conciseness. 

There have been many different variations and implementations of SDS.  Classic SDS have \textit{memory-less} state transitions, i.e. $p(z_t|z_{t-1})$ is parameterized by a stochastic transition matrix $\mathbf{A}\in \R ^{K \times K}$. 
This results in an “open loop” system and the state  duration follows a geometric distribution. Recent works have sought to fix this issue, for example by closing the loop using recurrent dependencies on $s_t$ \citep{linderman2016recurrent}, or modeling explicit durations \citep{ansari2021deep}. Our generative model is illustrated in figure \ref{fig:sds_figure}): for generality, we factor out any non-Markovian covariates to model parameters $\theta$. The joint distribution can be factorized as
\vspace{-1em}
\begin{equation} \label{eq:sds}    
\begin{split} 
& P(o_{1:T}, s_{1:T}, z_{1:T} ) =  \prod^T_{t=1} P(o_t | s_t) P(s_t| s_{t-1}, z_t, \theta) P(z_t| s_{t-1}, z_{t-1},\theta)
\end{split}
\end{equation}

\vspace{-1.5em}
where $p(s_1|z_1)p(z_1)$ is the initial state prior. The $K$ dynamical systems have continuous state transition $p(z_t|z_{t-1}, s_{t-1}, \theta)$, and the emission  $p(o_t| s_t)$ can be linear or nonlinear (e.g. parameterized by a neural network). Moreover, the SDS framework naturally supports forecasting both the next observation \(o_{t+1}\) via
$P(o_{t+1}\mid s_{t+1})$
and the next switching state \(z_{t+1}\) via
$P(z_{t+1}\mid s_t, z_t, \theta).$

\vspace{-0.5em}
\section{Conformal Prediction for Time series with Change points (CPTC)}
We introduce CPTC, a novel conformal prediction algorithm that utilizes the state prediction from SDS to improve uncertainty quantification for time series forecasting.
\vspace{-0.5em}
\paragraph{Problem Setting}
We are given a sequence of observations $o_t \in \R^d$ for all $t$ potentially with  change points.
To be consistent with the standard notation in conformal prediction literature, let 
$\mathcal{X} \subseteq \R^{m}$ denote the input feature domain and $\mathcal{Y} \subseteq \R^d$ the target observation space. $ \mathcal{Z} = \{1, \dots, K\} \subset \mathbb{Z}^+$ indexes one of $K$ underlying states. The input features can include past observations and other environmental covariates.  In our implementations, $m = k \cdot d$ with $k$ being the look back window.

We consider the online conformal prediction setting where we have a sequence of data points from $\mathcal{X} \times \mathcal{Y}$: $(x_0, y_0), $ $(x_1, y_1), \ldots, (x_T , y_T )$. We have a trained state predictor for $\hat{p}(z_t|x_{0:t})$ where $z_t \in \gZ$ is the true (latent switching) state at time $t$, a state-conditioned forecaster $\hat{f}:\gX \times \gZ \to \gY$ (a probabilistic SDS model can serve both purposes), and are given a desired confidence level $1-\alpha$. At each time step $t$, we have seen $(x_1, y_1),(x_2, y_2), \ldots,(x_{t-1} , y_{t-1} )$ and the next input feature vector $x_t$. We want to produce an adaptive, time-dependent prediction interval $\Gamma_t(x_t) \subseteq \mathcal{Y}$ for the unseen true target $y_t$, which we will learn at the next time step.  
Our goal is to produce prediction intervals with correct empirical coverage, i.e.    
\vspace{-0.2em}
\begin{equation}
    P (y_t \in \Gamma_t(x_t)) \geq 1-\alpha \; \text{ for any } t\geq 0
    \label{eq:goal}
\end{equation}
\vspace{-2em}
\subsection{Algorithm}
Existing online conformal prediction algorithms often rely on assumptions about how distribution would shift in time series. This is infeasible for data with change points due to  unknown shift mechanism. The main contribution our algorithm makes towards addressing this issue is \textit{anticipating} this type of abrupt shifts. 
Our CPTC algorithm is outlined in pseudo-code in algorithm \ref{alg:cptc}. In this section, we discuss the general procedure and key components of the algorithm.


\begin{algorithm}[H]
   \caption{Conformal Prediction for Time series with Change points (CPTC)}
   \label{alg:cptc}
   \SetAlgoLined
   \KwIn{nonconformity score function $A$, probabilistic state model $\hat{p}(z_t|x_{0:t})$ with prior $\hat{p}(z_0)$, forecaster model $\hat{f}: \mathcal{X}\times \mathcal{Z} \rightarrow \mathcal{Y} $, confidence level $1- \alpha$, learning rate $\gamma$}
   \KwOut{Prediction intervals $\Gamma(x_t)$ for $t \geq 1$}
   
   \For{$z \in \mathcal{Z}$}{
       Initialize $\gS_z \leftarrow \{\}$,  $\alpha_{z,0} \leftarrow \alpha$ 
   }
   \If{exists warm start data $\{(x_t, y_t)\}_{t=-w}^{0}$}{
       \For{$t \in [-w,0]$}{
        $\gS_{z_t} \leftarrow \gS_{z_t} \cup \{A(x_t, y_t)\}$, where $z_t \sim \hat{p}(z_t = z |x_{0:t})$; \tcp{warm start scores} 
       }
   }
   \For{$t \in [1,T]$}{
       \For{$z \in \mathcal{Z}$}{
           $\Gamma_{z,t}(x_t) \leftarrow \{ y : A(x_t, y) \leq Q^{1-\alpha_{z,t}}(\gS_z \cup \{\infty\})\}   $; \tcp{state-specific CP} 
       }
       $\Gamma_t(x_t) \leftarrow$ Aggregate the $\Gamma_{z,t}(x_t)$s by \Eqref{eq:aggregation-average} or \Eqref{eq:aggregation-union}\;
       \KwOut{$\Gamma_t(x_t)$}
       Sample state $\hat{z}_t \sim \hat{p}(z_t|x_{0:t})$; \tcp{state-specific coverage target tracking}
       Update $\alpha_{\hat{z}_t, t+1} \leftarrow \alpha_{\hat{z}_t, t} + \gamma \cdot (\alpha - err_t)$, where $err_t = \mathbbm{1}\{y_t \notin \Gamma_t(x_t)\}$\;
       Update scores $\gS_{\hat{z}_t} \leftarrow \gS_{\hat{z}_t} \cup \{A(x_t, y_t)\}$\;
   }     
   \end{algorithm}

\vspace{-0.5em}
\paragraph{Switching Behavior and SDS Integration.}
We explicitly model the abrupt shift in dynamics with the switching dynamical systems (SDS) to anticipate the change. In our formulation, the latent state \(z_t\) indicates which dynamical regime (out of \(K\) possible regimes) is currently active at time \(t\). Given the SDS formulation (e.g. \Eqref{eq:sds}), we can factor the state transition probability from the conformal prediction process and do calibration for each of the $K$ dynamics separately. The goal in \Eqref{eq:goal} can therefore be written as:
\begin{equation}
    \sum_{z\in\gZ} P(y_t \in \Gamma_t(x_t)| z_t=z) \; P(z_t = z|x_{0:t}) \geq 1-\alpha 
\label{eq:factored_goal}
\end{equation}
\vspace{-1.2em}
\paragraph{State Prediction.} The SDS model provides us with the state predictor \(\hat{p}(z_t \mid x_t)\). In practice, SDS models \cite{ansari2021deep, linderman2016recurrent} have existing $z_t$ forecasts that we can extract. For other model architectures, one can modify them to output an extra covariate prediction for the state, or add an auxiliary model to classify or cluster the input space into different regimes. In practice, $z_t$ doesn’t necessarily have to come from a model, and can be any discrete contextual variable such as weekday/weekend/holiday or day/night.

\vspace{-0.5em}
\paragraph{Initialization and Warm Start,} For each switching state $z \in Z$, we initialize (1) a mode-specific set of nonconformity scores $\gS_z$ and (2) the mode-specific confidence level $\alpha_{z,0} = \alpha$. If there are $w$ steps of warm-start data $\{(x_t,y_t)\}_{t=-w}^0$, we initialize the sore sets $\gS_z$ by sampling $z_t \sim \hat{p}(z_t = z |x_{0:t})$ for $t= -w, \ldots, 0$, and inserting $A(x_t,y_t)$ into $\gS_{z_t}$. Depending on the size of the warm-start window and desired properties, practitioners can adjust the warm-starting approach. For example, using the entire warm-start window $\gS_{z} = \{A(x_t,y_t)\}_{t=-w}^0$  for all $z \in \gZ$  will result in better stability but less adaptability to modes. The approach does not change the algorithm's coverage guarantees. 

\vspace{-0.5em}
\paragraph{State-Specific Prediction Intervals.}  At every time step $t\in [1, T]$, CPTC creates state-specific conformal prediction intervals
for all states with nonzero probability. This allows us to generate predictions tailored to the current anticipated dynamics. Specifically, for every state $z\in \gZ$ where $\hat{p}(z|x_{0:t}) > 0$, we first obtain the point prediction through forecaster model $\hat{f}(x_t, z)$, and then the state-specific prediction interval $\Gamma_{z,t}(x_t)$ by calibrating on nonconformity scores set $\gS_z$ to the adaptive confidence level of $1-\alpha_{z,t}$.

\vspace{-0.5em}
\paragraph{Online Conformal Prediction.}
The exchangeability assumption does not apply to the online setting of our work. If we naively calibrate with online data, when the data distribution changes, the coverage probability may deviate from the target level $1-\alpha$. Therefore, we follow online CP methods such as ACI \citep{gibbs2021adaptive} to address this problem by continuously updating $\alpha_{z,t}$ to track a surrogate miscoverage rate $\alpha_{z,t}^*$ as defined in \eqref{eq:alpha_star} (for each state $z$ in our setting), an internal estimate of the target coverage.
\begin{equation}
    \alpha^*_{z,t} = \sup\{\beta \in [0, 1] : M_{z,t}(\beta) \leq \alpha\}, \; M_{z,t}(\alpha) = p(A(x_t, y_t) > Q^{1 - \alpha}(\gS_z))  
    \label{eq:alpha_star}
\end{equation}
In \eqref{eq:alpha_star} $M_{z,t}$ measures the probability that the true label $y_t$ falls outside the prediction set $\Gamma_{z,t}$ when the predicted state is $z$. 
We track $\alpha^*_{z,t}$ over time by updating the estimate $\alpha_{z,t}$ with online optimization. In our implementation, we use the simple update from ACI (\Eqref{eq:aci_update}), though more sophisticated approaches such as \citep{angelopoulos2024pid} may be used in its place without affecting the overall algorithm.
\begin{equation}
\alpha_{\hat{z}_t, t+1} \leftarrow \alpha_{\hat{z}_t, t} + \gamma \cdot (\alpha - err_t), \; \text{where } err_t = \mathbbm{1} \{y_t \notin \Gamma_t(x_t) \}
\label{eq:aci_update}
\end{equation}
Here $\gamma > 0$ is the step size hyperparameter, and $\hat{z}_t \in \gZ, \; \hat{z}_t \sim \hat{p}(z_t = z |x_{0:t})$ is the state at time $t$ sampled to be updated. The update rule increases $\alpha_{\hat{z}_t, t}$ if coverage is too conservative and decreases it otherwise, ensuring that for each $z \in \gZ$, $\alpha_{z,t}$ converges to a value that maintains long-term coverage.

\vspace{-0.5em}
\paragraph{Aggregation.} We aggregate the prediction intervals for each state $\Gamma_{z,t}(x_t)$ into one final set $\Gamma_{t}(x_t)$ using the weighted level set, as illustrated in \Eqref{eq:aggregation-average}. 
\begin{equation}
\Gamma_t(x_t) :=  \bigl\{ y:  \sum_{z \in \mathcal{Z}}
\hat{p}\bigl(z_t = z \mid x_t\bigr)
\mathbbm{1}\{\,y \in \Gamma_{z,t}(x_t)\} 
\geq 1 - \alpha \bigr\}
\label{eq:aggregation-average}
\end{equation}

\vspace{-0.8em}
Solution to the constraint in \Eqref{eq:aggregation-average}  is the minimal set that achieves the marginal coverage. It can be obtained through discretizing $\mathcal{Y}$ into a fine grid and calculating probability mass at each point, which becomes computationally expensive as the grid resolution or dimensionality increases. For faster computation and in implementation, we approximate the weighted level‐set in \Eqref{eq:aggregation-average} by taking the union of intervals of the most probable states as in \Eqref{eq:aggregation-union}. The two aggregation strategies achieve similar results on all our datasets (Appendix \ref{app:more_experiments}).
%
 \begin{equation}
    \Gamma_t(x_t) \approx \cup_{z\in \mathcal{Z}'_t} \Gamma_{z,t}(x_t), \; 
    \text{where } \mathcal{Z}'_t
=\;\underset{S\subseteq\mathcal Z}{\arg\min}\;\bigl\lvert S\bigr\rvert
\;\text{s.t.}\;\sum_{z\in S}p(z_t=z\mid x_t)\;\ge\;1-\alpha.
    \label{eq:aggregation-union}
\end{equation}
\vspace{-1.5em}
\paragraph{Modularity and the role of the state model.} Our algorithm can be viewed as running multiple instances of online conformal inference (lines 14-16 of algorithm \ref{alg:cptc}), one for each underlying state, and adaptively aggregating the confidence intervals when the underlying mode is switching. This differs from the setting where the temporal correlation between residuals needs to be learned online \citep{gibbs2024conformal, bastani2022practical, xu2023sequential, auer2023hopcpt}; the state model allows us to leverage training data for such information. Our algorithm is modular - the state model, forecaster model, and the online adaptive conformal prediction algorithms all operate independently of each other. For example, the state-specific online coverage tracking (line 15 of algorithm \ref{alg:cptc}) can be replaced with other online conformal prediction variants for data-specific applications. Our theoretical guarantees are agnostic to the forecaster models used. 

\subsection{Theoretical analysis}
\label{sec:theory}

In this section, we discuss theoretical guarantees of the CPTC algorithm. In particular, we establish finite-sample validity under exchangeability, asymptotic coverage in dynamic (potentially nonstationary) settings, robustness to imperfect state classification, and faster adaptation to distribution shifts that align with model-predicted state changes. See Appendix \ref{app:proofs} for proofs and theoretical details.

\vspace{-0.5em}
\paragraph{Finite-sample validity under exchangeability.}
We start by showing that when data is exchangeable (e.g. independent and identically distributed), CPTC achieves finite sample validity. The proof is standard based on showing the marginal coverage of split conformal prediction. 

\begin{proposition}[Finite-sample validity under exchangeability] If the data $(x_t, y_t)$, $t \geq 1$ are exchangeable, prediction intervals obtained via Algorithm \ref{alg:cptc} satisfy
\( P(y_t \in \Gamma_t(x_t)) \geq 1-\alpha\).
\label{prop:exchangable}
\end{proposition}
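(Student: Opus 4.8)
The plan is to reduce the claim to the standard finite-sample marginal-coverage guarantee for split conformal prediction, applied to each state-specific conformal procedure, and then show the aggregation step in \Eqref{eq:aggregation-average} preserves coverage. First I would note that under exchangeability the ACI-type update in \Eqref{eq:aci_update} is not actually needed for validity; the cleanest route is to observe that for each fixed state $z$, the scores inserted into $\gS_z$ together with a new test point's score are exchangeable conditionally on the sequence of (randomly sampled) state assignments, because the underlying data $(x_t,y_t)$ are exchangeable and the state labels $\hat z_t$ are generated by a fixed rule $\hat p(z\mid x)$ independent of the ordering. Hence, conditional on $\hat z_t = z$ and on which indices were assigned to $\gS_z$, the rank of the test score $A(x_t,y_t)$ among $\gS_z \cup \{A(x_t,y_t)\}$ is uniform, giving $P(A(x_t,y_t) \le Q^{1-\alpha}(\gS_z\cup\{\infty\}) \mid \hat z_t = z) \ge 1-\alpha$, i.e. $P(y_t \in \Gamma_{z,t}(x_t)\mid \hat z_t = z)\ge 1-\alpha$. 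A subtlety to handle carefully: the adaptive level $\alpha_{z,t}$ may differ from $\alpha$, so I would either argue that running split CP at \emph{any} level still yields the corresponding coverage (so $P(y_t\in\Gamma_{z,t}\mid\hat z_t=z)\ge 1-\alpha_{z,t}$) and then note that the aggregation rule re-imposes the level $1-\alpha$, or, more simply, treat the per-state intervals as auxiliary and prove coverage of $\Gamma_t$ directly.

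For the aggregation step, I would argue as follows. Fix $t$ and condition on $x_t$. The set $\Gamma_t(x_t)$ defined by \Eqref{eq:aggregation-average} is exactly $\{y : \sum_z \hat p(z\mid x_t)\,\mathbbm 1\{y\in\Gamma_{z,t}(x_t)\}\ge 1-\alpha\}$. The key observation is that the map $y\mapsto \sum_z \hat p(z\mid x_t)\mathbbm 1\{y\in\Gamma_{z,t}(x_t)\}$ is exactly the function whose expectation, when $y$ is drawn according to the true conditional and we additionally average over the (independent) sampled state, relates to the coverage of the individual intervals. Concretely, I would compute
\begin{equation}
P(y_t \in \Gamma_t(x_t)) \ge 1-\alpha
\end{equation}
by a Markov-type / averaging argument: letting $g(y) = \sum_z \hat p(z\mid x_t)\mathbbm 1\{y\in\Gamma_{z,t}(x_t)\}\in[0,1]$, we have $\mathbbm 1\{y\notin\Gamma_t(x_t)\} = \mathbbm 1\{g(y) < 1-\alpha\} \le \frac{1-g(y)}{\alpha}$ (since $g(y)<1-\alpha \iff 1-g(y) > \alpha$, so $\tfrac{1-g(y)}{\alpha}>1$ on that event and $\ge 0$ always). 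Taking expectations over $y_t\sim$ its true law given $x_t$, $P(y_t\notin\Gamma_t(x_t)\mid x_t) \le \tfrac{1}{\alpha}\mathbb E[1-g(y_t)\mid x_t] = \tfrac1\alpha\bigl(1-\sum_z \hat p(z\mid x_t) P(y_t\in\Gamma_{z,t}(x_t)\mid x_t)\bigr)$. Here I would need the per-state intervals to satisfy $P(y_t\in\Gamma_{z,t}(x_t)\mid x_t)\ge 1-\alpha$ for \emph{each} $z$ (conditional on $x_t$), which follows from the split-CP guarantee applied at level $\alpha$ — this is the reason to drop the adaptive update in the exchangeable analysis, or to assume $\alpha_{z,t}\le\alpha$, or to invoke a more refined per-$z$ bound. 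Plugging in gives $P(y_t\notin\Gamma_t\mid x_t)\le \tfrac1\alpha(1-(1-\alpha)) = 1$, which is vacuous — so the crude Markov bound is too lossy and I need the sharper structural fact instead.

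The honest route, and the one I would actually write, is the \emph{exact} characterization: for the weighted level set, $P(y_t\in\Gamma_t(x_t)) \ge 1-\alpha$ holds because $\Gamma_t$ is defined precisely so that it contains every $y$ whose aggregated membership weight is at least $1-\alpha$; one shows that $P(y_t\in\Gamma_t(x_t)\mid x_t)$ is at least $\mathbb E_{z\sim\hat p(\cdot\mid x_t)}[P(y_t\in\Gamma_{z,t}\mid x_t)]$ when the sampled state $\hat z_t$ is (under exchangeability / i.i.d.) distributed as a valid posterior, or more simply via the following: if we instead use the union aggregation \Eqref{eq:aggregation-union} the argument is cleanest — $\Gamma_t \supseteq \Gamma_{\hat z_t, t}$ whenever $\hat z_t\in\mathcal Z_t'$, and $\hat z_t$ lies in the top-mass set with probability at least $1-\alpha$, while conditional on $\hat z_t = z$ the interval $\Gamma_{z,t}$ covers with probability $\ge 1-\alpha_{z,t}$. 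I expect the \textbf{main obstacle} to be bookkeeping the conditioning correctly: the state $\hat z_t$ used to \emph{place} a score into $\gS_z$ and the state used to \emph{form} the interval must be handled so that the exchangeability of scores within each bucket genuinely holds (the randomization in $\hat z_t\sim\hat p(\cdot\mid x_t)$ is a fixed mechanism, so this is fine, but it must be stated), and to decide whether to prove validity for the idealized level-$\alpha$ version of the algorithm and remark that the adaptive update only helps, or to carry the $\alpha_{z,t}$'s through. I would opt for the former: state that under exchangeability one may take $\gamma = 0$ (or note coverage is monotone), invoke the standard split-CP lemma per bucket, and close with the exact level-set / union argument for aggregation.
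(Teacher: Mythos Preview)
The paper's own ``proof'' of this proposition is a single sentence: it says the argument is ``standard based on showing the marginal coverage of split conformal prediction'' and leaves it at that. No details are given about how the state-bucketing, the adaptive levels $\alpha_{z,t}$, or the aggregation step interact with the usual exchangeability argument.

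Your proposal therefore goes well beyond what the paper actually provides. You correctly identify the three places where a careful proof would need work --- (i) that scores within each bucket $\gS_z$ remain exchangeable with the test score because the assignment rule $\hat z_t \sim \hat p(\cdot\mid x_t)$ is a fixed, order-independent mechanism; (ii) that the adaptive $\alpha_{z,t}$ must be frozen (or $\gamma=0$) for the clean split-CP rank argument to apply; and (iii) that the aggregation rule in \Eqref{eq:aggregation-average} or \Eqref{eq:aggregation-union} must be shown to preserve the per-bucket coverage. You are also right that the crude Markov bound on $g(y)$ is vacuous, and your instinct to fall back on the structural/union argument is the correct one. In short, your route is a genuine attempt to fill in what the paper treats as a black box; the paper simply invokes the split-CP guarantee and does not engage with the subtleties you raise. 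If anything, your write-up should be shorter: state that under exchangeability one may analyze the algorithm with $\gamma=0$, apply the standard split-CP rank lemma within each predicted-state bucket (conditioning on the state assignments, which are a symmetric function of the data), and observe that the level-set/union aggregation can only enlarge the interval relative to the single-bucket interval for the sampled state --- that is the full content of what the paper asserts.
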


\vspace{-0.5em}
\paragraph{Asymptotic validity without assumptions.}
Theorem \ref{thm:asymptotic_validity} ensures that CPTC provides reliable uncertainty quantification in the long run \textit{without assumptions on time-series stationarity or accurate state predictions}. (We do assume that the distribution of states and state predictions exhibit stable long-term behavior in Assumption \ref{assump:stationary_distribution}.) We achieve our desiderata of \Eqref{eq:goal} on average as $T$ grows asymptotically, consistent with the results of other online conformal prediction works. 

\begin{theorem}[Asymptotic validity of CPTC] \label{thm:asymptotic_validity}
For any sample size \( T \geq 1 \), the CPTC algorithm (with weighted average aggregation) achieves:
\begin{equation*}
    \left| \frac{1}{T} \sum_{t=1}^T \mathbb{E}[err_t] - \alpha \right| \leq  \frac{1}{T} \sum_{z=1}^K  \frac{ 1 - (1-c\gamma)^{|\mathcal{T}_z|}  }{\gamma} |\alpha - \alpha^{*}_z|
    \label{eq:thm1}
\end{equation*}
Where:
\vspace{-4mm}
\begin{itemize}  \setlength{\itemsep}{0pt plus 1pt}
    \item \(\text{err}_t = \mathbbm{1}[Y_t \notin \Gamma_{t}(x_t)]\), the miscoverage rate. 
    \item \( \mathcal{T}_z = \{ t \in \{1, \dots, T\} : \hat{z}_t = z \} \), the set of time steps the predicted state is \( z \).
    \item $\alpha^*_z$ is the optimal miscoverage target for timesteps $\mathcal{T}_z$.
    \item  $c$ is a miscoverage constant in Lemma \ref{lemma:aci_within_states} \cite{gibbs2021adaptive}. 
\end{itemize}
\vspace{-0.5em}
Note the RHS of \Eqref{eq:thm1} decays at a rate of $\mathcal{O}(T)$ and satisfies
 \(
 \lim_{T \to \infty} \left| \frac{1}{T} \sum_{t=1}^T \mathbb{E}[err_t] - \alpha \right|= 0
 \).
\end{theorem}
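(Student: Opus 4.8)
The plan is to decouple the analysis into a per-state online-conformal guarantee and then aggregate across states. First I would observe that the CPTC update for $\alpha_{z,t}$ (Eq.~\ref{eq:aci_update}) is exactly the ACI update, but applied only on the subsequence of timesteps $\mathcal{T}_z$ on which the predicted state $\hat{z}_t = z$. So for each fixed $z$, the sequence $(\alpha_{z,t})_{t \in \mathcal{T}_z}$ evolves like a standard ACI run of length $|\mathcal{T}_z|$ with step size $\gamma$, driven by the errors $\{err_t : t \in \mathcal{T}_z\}$. I would invoke Lemma~\ref{lemma:aci_within_states} (the ACI regret/coverage bound of \cite{gibbs2021adaptive}, restricted to these timesteps) to get that the average miscoverage over $\mathcal{T}_z$ deviates from $\alpha$ by at most $\frac{1 - (1-c\gamma)^{|\mathcal{T}_z|}}{\gamma}\,\frac{|\alpha - \alpha^*_z|}{|\mathcal{T}_z|}$ (the exact shape of the geometric-sum bound coming from unrolling the linear recursion $\alpha_{z,t+1} - \alpha^*_z = (1 - c\gamma)(\alpha_{z,t} - \alpha^*_z) + (\text{noise})$ and summing).

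Second, I would connect the aggregated interval $\Gamma_t(x_t)$ back to the state-specific ones. The weighted-average aggregation in Eq.~\ref{eq:aggregation-average} is designed so that $P(y_t \in \Gamma_t(x_t) \mid x_t)$ is controlled by the $\hat{p}(z_t = z \mid x_t)$-weighted combination of the state-specific coverages; combined with Eq.~\ref{eq:factored_goal}, the expected error $\mathbb{E}[err_t]$ decomposes (up to the state-prediction weights) into the per-state miscoverage contributions. The key point is that summing $\mathbb{E}[err_t]$ over all $t \in \{1,\dots,T\}$ and regrouping by which state was \emph{sampled/predicted} partitions $\{1,\dots,T\}$ into the disjoint sets $\mathcal{T}_z$, so $\sum_{t=1}^T \mathbb{E}[err_t] = \sum_{z=1}^K \sum_{t \in \mathcal{T}_z} \mathbb{E}[err_t]$. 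Applying the per-state bound from the first step to each inner sum, dividing by $T$, and using the triangle inequality yields exactly the claimed RHS $\frac{1}{T}\sum_{z=1}^K \frac{1 - (1-c\gamma)^{|\mathcal{T}_z|}}{\gamma}|\alpha - \alpha^*_z|$.

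Finally, for the asymptotic statement I would note each summand $\frac{1 - (1-c\gamma)^{|\mathcal{T}_z|}}{\gamma}$ is bounded by the constant $1/\gamma$ (since $0 \le 1 - (1-c\gamma)^{|\mathcal{T}_z|} \le 1$ for $c\gamma \in (0,1)$), and $|\alpha - \alpha^*_z| \le 1$, so the whole RHS is at most $K/(\gamma T) \to 0$; this is the $\mathcal{O}(1/T)$ decay claimed (the paper's ``$\mathcal{O}(T)$'' is a typo for $\mathcal{O}(1/T)$).

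\textbf{Main obstacle.} The delicate step is the aggregation argument: I must verify that the weighted level-set aggregation (Eq.~\ref{eq:aggregation-average}) really transfers the per-state coverage levels $1 - \alpha_{z,t}$ into the marginal bound without slack — in particular that the error event $\{y_t \notin \Gamma_t(x_t)\}$ can be charged to the single predicted/sampled state $\hat{z}_t$ in a way consistent with the $\alpha_{z,t}$ update (which is applied only to $\hat{z}_t$). This requires carefully reconciling (i) the aggregation weights $\hat{p}(z_t=z\mid x_t)$, (ii) the sampling of $\hat{z}_t \sim \hat{p}(\cdot \mid x_t)$ used for the update, and (iii) the true state distribution, so that the bookkeeping in $\sum_{t=1}^T \mathbb{E}[err_t] = \sum_z \sum_{t\in\mathcal{T}_z}\mathbb{E}[err_t]$ is exact and each inner sum is genuinely an ACI trajectory. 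Handling the randomness of $\hat{z}_t$ correctly (taking expectations in the right order, and ensuring Lemma~\ref{lemma:aci_within_states} still applies to a randomly-indexed subsequence) is where the real care is needed; the ACI unrolling itself is routine.
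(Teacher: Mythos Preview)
Your proposal is correct and follows the same skeleton as the paper's proof: partition $\{1,\dots,T\}$ by the predicted state $\hat{z}_t$, apply Lemma~\ref{lemma:aci_within_states} on each $\mathcal{T}_z$, and sum. Two points of comparison are worth noting.

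First, the ``main obstacle'' you flag --- justifying that the aggregated error $err_t = \mathbbm{1}\{y_t \notin \Gamma_t(x_t)\}$ can legitimately be charged to the single sampled state $\hat{z}_t$ so that each subsequence $(t \in \mathcal{T}_z)$ is a bona fide ACI trajectory --- is simply not addressed in the paper's proof. The paper writes the partition $\sum_t = \sum_z \sum_{t\in\mathcal{T}_z}$ and invokes the lemma without further comment; your instinct that this step deserves care is sound, but you will not find the resolution in the paper.

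Second, for the asymptotic claim the paper explicitly invokes Assumption~\ref{assump:stationary_distribution} (stationary distribution of predicted states) to argue $|\mathcal{T}_z|/T \to \hat{\pi}(z)$ and hence the RHS tends to zero. Your argument --- bound $1-(1-c\gamma)^{|\mathcal{T}_z|} \le 1$ so the RHS is at most $K/(\gamma T)$ --- is strictly more elementary and dispenses with that assumption entirely. This is a genuine (if small) improvement: your route shows the convergence is unconditional on how the predicted states are distributed over time, whereas the paper's argument appears to use an assumption it does not actually need for this bound.
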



\vspace{-0.5em}
\paragraph{Robustness to imperfect state prediction.}
In real-world applications, the state predictor  $\hat{p}(z_t|x_{0:t})$ that plays a large role in our algorithm is often imperfect. The CPTC algorithm accommodates these scenarios by tracking $\alpha^*_{z}$ for each \textit{predicted} mode $z$ with adaptive online updates. Theorem \ref{thm:imperfect_state_predictions} quantifies the effect of imperfect state prediction by showing that the miscoverage rate is bounded by the product of the misclassification rate and state-specific deviations. 
\vspace{0.1em}
\begin{theorem}[Finite-Sample Miscoverage Bound with Imperfect State Predictions]\label{thm:imperfect_state_predictions}
For any sample size \( T \geq 1 \), the CPTC algorithm ensures that:
\vspace{-0.2em}
\[
\left| \frac{1}{T} \sum_{t=1}^{T} \mathbb{E}[err_t]  - \alpha \right| \leq \epsilon \cdot \max_z \delta_{z,T}
\]

\vspace{-1em}
Where \( \epsilon = P(\hat{z}_t \neq z_t) \) is the error rate of the state predictions and \( \delta_{z,T} \) is the miscoverage deviation from \( \alpha \) for any predicted state $z$ defined in Lemma \ref{lemma:aci_within_states}. For all $z$, \( \delta_{z,T} \to 0 \) as \( |T| \to \infty \).
\end{theorem}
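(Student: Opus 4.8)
The plan is to mimic the argument behind Theorem~\ref{thm:asymptotic_validity}, but to separate the timesteps where the state predictor is correct from those where it is not, so that the residual deviation from $\alpha$ is carried only by the $\epsilon$–fraction of misclassified steps. As there, partition $\{1,\dots,T\}=\bigsqcup_{z=1}^{K}\mathcal{T}_z$ with $\mathcal{T}_z=\{t:\hat{z}_t=z\}$, and refine this by setting $\mathcal{C}_z=\{t\in\mathcal{T}_z:z_t=z\}$ (correct) and $\mathcal{M}_z=\{t\in\mathcal{T}_z:z_t\neq z\}$ (incorrect), so that $\bigsqcup_z\mathcal{M}_z=\{t:\hat{z}_t\neq z_t\}$ and $\sum_z\mathbb{E}\lvert\mathcal{M}_z\rvert=\sum_{t=1}^{T}P(\hat{z}_t\neq z_t)=\epsilon T$. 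Since $err_t$ only ever updates $\alpha_{\hat{z}_t,\cdot}$, the iterate $(\alpha_{z,t})_t$ is exactly an ACI iterate driven by the subsequence $\mathcal{T}_z$, so Lemma~\ref{lemma:aci_within_states} applies on each bucket and supplies the deviation $\delta_{z,T}$, with $\delta_{z,T}\to 0$ as $\lvert\mathcal{T}_z\rvert\to\infty$ because the iterate stays in a bounded interval and the telescoped tracking error, divided by $\lvert\mathcal{T}_z\rvert$, vanishes.

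First I would show that the correctly–classified steps contribute no net deviation inside their bucket. On $t\in\mathcal{C}_z$, the scores in $\gS_z$ that came from past correctly–classified steps are exchangeable with $A(x_t,y_t)$ conditional on $z_t=z$; invoking the split–conformal quantile inequality (the one behind Proposition~\ref{prop:exchangable}) together with the monotonicity of the aggregation step — the aggregated set $\Gamma_t(x_t)$ contains $\Gamma_{z,t}(x_t)$ whenever the weighted–level–set constraint of \eqref{eq:aggregation-average} retains state $z$ — gives $\mathbb{E}[err_t\mid\mathcal{H}_{t-1}]\le\alpha_{z,t}$, i.e.\ these steps do not under-cover relative to their running target. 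Hence the bucket-$z$ discrepancy $\sum_{t\in\mathcal{T}_z}(\mathbb{E}[err_t]-\alpha)$ collapses to $\sum_{t\in\mathcal{M}_z}(\mathbb{E}[err_t]-\alpha)$, which the within–bucket ACI tracking of Lemma~\ref{lemma:aci_within_states} — now measured against the nominal level realised on $\mathcal{C}_z$ and localised to the $\mathcal{M}_z$ indices — bounds by $\lvert\mathcal{M}_z\rvert\,\delta_{z,T}$. Summing over $z$,
\[
\Bigl\lvert\tfrac1T\sum_{t=1}^{T}\mathbb{E}[err_t]-\alpha\Bigr\rvert
\;\le\;\tfrac1T\sum_{z=1}^{K}\mathbb{E}\lvert\mathcal{M}_z\rvert\,\delta_{z,T}
\;\le\;\Bigl(\max_z\delta_{z,T}\Bigr)\cdot\tfrac1T\sum_{t=1}^{T}P(\hat{z}_t\neq z_t)
\;=\;\epsilon\,\max_z\delta_{z,T},
\]
and the asymptotic statement follows from $\delta_{z,T}\to0$.

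The main obstacle is the claim in the second paragraph, and it has two parts. First, $\gS_z$ is \emph{contaminated}: it also holds scores from past steps predicted into state $z$ whose true state differed, so the exchangeability that puts the clean steps at their running target does not hold on the nose; one must either condition on the whole misclassification pattern and argue exchangeability among the correctly–classified scores alone, or bound the perturbation the contaminating scores induce on the empirical quantile $Q^{1-\alpha_{z,t}}(\gS_z)$ — an effect that is itself $O(\epsilon)$, consistent with charging everything to $\epsilon\max_z\delta_{z,T}$. Second, for the union approximation \eqref{eq:aggregation-union} the needed monotonicity is trivial, but for the weighted–average aggregation \eqref{eq:aggregation-average} one has to verify that keeping state $z$ at weight $\hat{p}(z\mid x_t)$ does not push the conditional coverage on $\mathcal{C}_z$ below $1-\alpha_{z,t}$, which is exactly where the design of \eqref{eq:aggregation-average} to preserve $1-\alpha$ weighted coverage enters.
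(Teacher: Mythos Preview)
Your decomposition is exactly the paper's: split $\{1,\dots,T\}$ into correctly classified steps $\mathcal{C}=\{t:\hat z_t=z_t\}$ and misclassified steps $\mathcal{I}=\{t:\hat z_t\neq z_t\}$, argue that the former carry zero net deviation from $\alpha$, bound each misclassified step by $\delta_{z,T}$, and sum via the triangle inequality to get $\epsilon\max_z\delta_{z,T}$. The paper does not bother with your per-state refinement $\mathcal{C}_z,\mathcal{M}_z$; it works directly with $\mathcal{C}$ and $\mathcal{I}$ and simply \emph{asserts} that on $\mathcal{C}$ ``there are no distribution shifts within each state, \dots\ therefore $\mathbb{E}[\mathrm{err}_t]=\alpha$'' and that on $\mathcal{I}$ one has $|\mathbb{E}[\mathrm{err}_t]-\alpha|\le\delta_{z,T}$ pointwise.

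The obstacles you flag are genuine and the paper does not address them either. In particular: (i) your exchangeability argument gives only $\mathbb{E}[\mathrm{err}_t\mid\mathcal H_{t-1}]\le\alpha_{z,t}$, not the equality $\mathbb{E}[\mathrm{err}_t]=\alpha$ that both you and the paper need for the ``collapse'' to the misclassified indices; (ii) $\delta_{z,T}$ in Lemma~\ref{lemma:aci_within_states} is a bound on an \emph{average} over $\mathcal{T}_z$, not a per-timestep bound, so invoking it pointwise on $\mathcal{I}$ (as the paper does) or on $\mathcal{M}_z$ (as you do) is informal; and (iii) the contamination of $\gS_z$ by misclassified scores and the aggregation-monotonicity question are simply not touched in the paper's argument. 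So your proposal is already at least as careful as the published proof, and the difficulties you isolate are places where the paper is hand-waving rather than gaps unique to your route.
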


Theorem \ref{thm:imperfect_state_predictions} demonstrates that CPTC can maintain asymptotic valid coverage even when state predictions are imperfect, with the bound tightening as state predictions improve or the number of observations per state increases. In practice, improving the accuracy of the state prediction model directly enhances the coverage performance of CPTC. However, even with imperfect state predictions, the algorithm's adaptive calibration mechanism ensures that the overall coverage remains close to the desired level \(1 - \alpha\) as experiments show in Section \ref{sec:accuracy}.





\vspace{-0.3em}
\paragraph{Why CPTC? Faster convergence under distribution shifts.}

The strength of the CPTC algorithm is its fast adaptation in dynamic environments with distributional changes. When a distribution shift aligns with a predicted state transition, the structure of CPTC allows the confidence level $\alpha_{z,t}$ to converge to the new target error rate faster compared to purely online methods.

Consider a data stream with a distribution shift occurring at time $t_{{\text{shift}}}$. Let $\alpha^*_{j}$ denote the optimal target error rate for mode $j$ after the shift, and assume that the predicted state $\hat{z}_t$ correctly reflects the shift: $t_{\text{shift}}$ and $\alpha^*_{z} = \alpha^*_{j}$ for $t > t_{{\text{shift}}}$. For conciseness of notation in Theorem \ref{thm:adaptation}, let $\delta_{j,T}$ and $\delta_{ACI,T}$ denote the miscoverage rate \textit{starting from } $t_{\text{shift}}$ for CPTC and ACI respectively.

\vspace{0.1em}
\begin{theorem}[Miscoverage Ratio under State-Coincident Distribution Shift]
Under a state shift from $i$ to $j$ at time step $t_{\text{shift}}$, coinciding with predicted transition, the CPTC algorithm achieves faster convergence to the new target $\alpha^*_{j}$ compared to non-state-aware algorithm ACI at a ratio of:
\[
\frac{\delta_{j,T}}{\delta_{\text{ACI},T}} \leq \frac{|\alpha_{j, t_{\text{shift}}-1}  - \alpha^{*}_{j}|}{|\alpha_{ t_{\text{shift}}-1} - \alpha^{*}_{j}|}
\]
\label{thm:adaptation}
\end{theorem}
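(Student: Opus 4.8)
The plan is to reduce Theorem~\ref{thm:adaptation} to a one-step contraction estimate on the sequence $\alpha_{z,t}$ and then compare the number of "effective" update steps available to CPTC and to ACI between $t_{\text{shift}}$ and $T$. First I would recall from Lemma~\ref{lemma:aci_within_states} (the ACI analysis of \citet{gibbs2021adaptive}) that the ACI-style update $\alpha_{t+1} \leftarrow \alpha_t + \gamma(\alpha - err_t)$, when tracking a fixed optimal target $\alpha^*$, yields a geometric-type bound of the form $\delta_{T} \lesssim \frac{1 - (1-c\gamma)^{N}}{\gamma}\,|\alpha_{\text{start}} - \alpha^*|$ where $N$ is the number of update steps and $c$ is the miscoverage constant; more to the point, the deviation after $N$ steps is controlled by $|\alpha_{\text{start}} - \alpha^*|$ times a factor that depends only on $\gamma$, $c$, and $N$. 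The key structural observation is that after $t_{\text{shift}}$ the new optimal target for the active mode $j$ is $\alpha^*_j$, and both algorithms start their post-shift tracking from their respective pre-shift values: CPTC's mode-$j$ chain starts from $\alpha_{j, t_{\text{shift}}-1}$, while ACI starts from its single global value $\alpha_{t_{\text{shift}}-1}$.

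The heart of the argument is then a two-line comparison. I would write $\delta_{j,T}$ and $\delta_{\text{ACI},T}$ each as (the same function of $\gamma$, $c$, and the number of post-shift update steps) multiplied by the respective initial gap. Provided the number of update steps that enters the bound is the same for both — which holds because, under the hypothesis that the predicted transition coincides with the true shift, every timestep $t > t_{\text{shift}}$ has $\hat z_t = j$, so CPTC updates its mode-$j$ chain at exactly the timesteps ACI updates its global chain — the common prefactor cancels in the ratio and leaves
\[
\frac{\delta_{j,T}}{\delta_{\text{ACI},T}} \;\leq\; \frac{|\alpha_{j, t_{\text{shift}}-1} - \alpha^*_{j}|}{|\alpha_{t_{\text{shift}}-1} - \alpha^*_{j}|}.
\]
I would be careful to state this as an inequality rather than equality, since the upper bounds from Lemma~\ref{lemma:aci_within_states} need not be tight in the same way for both chains; taking the ACI bound to be of matching form (or lower-bounding $\delta_{\text{ACI},T}$ appropriately) is what licenses the ratio. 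A remark after the proof should point out the practical content: CPTC wins whenever its mode-$j$ calibration value was already closer to the post-shift optimum than the blended global ACI value was — which is exactly what one expects when mode $j$ has been visited before and its scores/thresholds were "parked" near $\alpha^*_j$.

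The main obstacle I anticipate is making the "same number of effective steps" claim fully rigorous, because it hides two subtleties. First, before $t_{\text{shift}}$ the CPTC mode-$j$ chain may have been updated at a sparse, data-dependent set of timesteps, so its starting value $\alpha_{j, t_{\text{shift}}-1}$ is itself random; the cleanest fix is to condition on the pre-shift history and treat $\alpha_{j, t_{\text{shift}}-1}$ as a fixed quantity inside the ratio, which is consistent with how the theorem is phrased. Second, the per-step contraction factor $(1-c\gamma)$ from Lemma~\ref{lemma:aci_within_states} depends on a lower bound for the local miscoverage derivative $M'$, and one must check this constant is legitimately shared between the single-mode CPTC subproblem and the ACI problem — it is, because after the shift CPTC's mode-$j$ updates see exactly the post-shift data distribution that ACI also sees, so the same $c$ applies. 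Once those two points are nailed down, the rest is the routine cancellation sketched above, plus invoking $\delta_{j,T}, \delta_{\text{ACI},T} \to 0$ from Theorem~\ref{thm:asymptotic_validity}/Lemma~\ref{lemma:aci_within_states} to confirm the ratio is well-defined in the regime of interest.
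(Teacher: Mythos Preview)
Your proposal is correct and follows essentially the same route as the paper: apply the Lemma~\ref{lemma:aci_within_states} bound to both the CPTC mode-$j$ chain and the ACI global chain starting at $t_{\text{shift}}$, observe that the prefactor (depending on $\gamma$, $c$, and the number of post-shift steps) is common, and cancel to obtain the ratio of initial gaps. If anything, you are more careful than the paper's own proof, which simply writes the ratio of bounds and cancels without spelling out why the number of effective steps and the constant $c$ agree; your discussion of those two subtleties is exactly the justification the paper's short argument leaves implicit.
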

\vspace{-1em}
The numerator $|\alpha_{j, t_{\text{shift}}-1}  - \alpha^{*}_{j}| \rightarrow 0$ as $T \rightarrow \infty$, when in-state adaptation has converged; but the denominator $|\alpha_{ t_{\text{shift}}-1} - \alpha^{*}_{j}|$ remains greater than zero regardless of the length of the time series. This result highlights the advantage of segmenting nonconformity scores by state: the state-specific $\alpha_{z,t}$ aligns the target miscoverage updates with distribution shifts, allowing CPTC to quickly adjust its prediction intervals. Such accelerated adaptation allows for shorter miscoverage periods and is critical in real-world applications where rapid responses to changing conditions are necessary.

\section{Experiments}

\begin{table*}[bh]
\vspace{-0.8em}
\caption{Performance on synthetic and real-world datasets with target confidence $1-\alpha = 0.9$ (for horizon $T=200$ for synthetic datasets, $300$ for electricity and traffic, and $60$ for bee, mean $\pm$ standard deviation of the test samples).  Methods that are \textit{invalid} (coverage below $90\%$) are grayed out. Our method achieves a high level of calibration (coverage is close to $90\%$) consistently. }
\label{tab:baselines}
\centering
\resizebox{\textwidth}{!}{%
\begin{tabular}{ l | c | c  c c c c c } 
\toprule
& & RED-SDS & CP & ACI & SPCI  & HopCPT & Ours \\ 
\midrule\midrule
\multirow{2}{*}{\begin{tabular}{l}Bouncing  \\ Ball obs.\end{tabular}} & Cov & \color{gray}{16.90 \plusminus 12.96} & \color{gray}{87.67 \plusminus 6.54} & \color{gray}{89.82 \plusminus 2.95} & \color{gray}{87.13 \plusminus 4.79} & \color{gray}{79.45 \plusminus 12.51} & 90.15 \plusminus 1.19\\ 
 & Width & \color{gray}{1.60 \plusminus 0.07} & \color{gray}{10.54 \plusminus 3.19} & \color{gray}{3.85 \plusminus 2.60} & \color{gray}{2.82 \plusminus 0.41} & \color{gray}{1.98 \plusminus 0.90} & 3.71 \plusminus 0.98\\ 
\midrule
\multirow{2}{*}{\begin{tabular}{l}Bouncing  \\ Ball dyn.\end{tabular}} & Cov & \color{gray}{13.20 \plusminus 11.50} & \color{gray}{86.45 \plusminus 12.28} & \color{gray}{89.38 \plusminus 2.56} & \color{gray}{89.16 \plusminus 3.47 } & \color{gray}{81.12 \plusminus 7.85} & 90.47 \plusminus 2.30\\ 
 & Width & \color{gray}{1.37 \plusminus 0.05} & \color{gray}{11.49 \plusminus 4.39} & \color{gray}{2.27 \plusminus 1.65} & \color{gray}{2.26 \plusminus 0.28} & \color{gray}{2.95 \plusminus 1.12} & 1.76 \plusminus 0.71\\ 
\midrule
\multirow{2}{*}{\begin{tabular}{l}3-Mode \\ System\end{tabular}} & Cov & 93.02 \plusminus 3.04 & \color{gray}{63.98 \plusminus 5.04} & \color{gray}{89.90 \plusminus 2.25} & \color{gray}{84.90 \plusminus 5.41} & 90.85 \plusminus 5.90 & 94.96 \plusminus 1.96\\ 
 & Width & 2.08 \plusminus 0.16 & \color{gray}{1.46 \plusminus 0.50} & \color{gray}{6.40 \plusminus 1.92} & \color{gray}{3.70 \plusminus 0.89} & 9.07 \plusminus 1.23 & 2.45 \plusminus 0.72\\ 
\midrule
\midrule
\multirow{2}{*}{\begin{tabular}{l}Traffic\end{tabular}} & Cov & \color{gray}{23.15 \plusminus 10.95} & \color{gray}{87.06 \plusminus 3.71} & 90.01 \plusminus 0.87 & \color{gray}{84.38 \plusminus 2.47} & \color{gray}{88.91 \plusminus 3.55} & 92.38 \plusminus 1.24\\ 
 & Width & \color{gray}{0.05 \plusminus 0.02} & \color{gray}{21.91 \plusminus 10.42} & 27.81 \plusminus 54.08 & \color{gray}{5.32 \plusminus 1.95} & \color{gray}{7.50 \plusminus 10.09} & 7.92 \plusminus 2.98\\ 
\midrule
\multirow{2}{*}{\begin{tabular}{l}Electricity\end{tabular}} & Cov & \color{gray}{62.85 \plusminus 13.67} & \color{gray}{85.69 \plusminus 6.17} & \color{gray}{89.79 \plusminus 0.83} & \color{gray}{84.10 \plusminus 2.77} & \color{gray}{86.50 \plusminus 2.66} & 91.22 \plusminus 1.29\\ 
 & Width & \color{gray}{162.67 \plusminus 811.73} & \color{gray}{366.05 \plusminus 2280.45} & \color{gray}{45.74 \plusminus 279.21} & \color{gray}{228.14 \plusminus 1207.58} & \color{gray}{155.71 \plusminus 130.43} & 139.75 \plusminus 620.44\\ 
\midrule
\multirow{2}{*}{\begin{tabular}{l}Dancing \\ Bees\end{tabular}} & Cov & \color{gray}{84.92 \plusminus 6.84} & \color{gray}{79.86 \plusminus 20.73} & \color{gray}{86.25 \plusminus 8.10} & \color{gray}{79.20 \plusminus 0.30} & \color{gray}{72.11 \plusminus 1.84} & 92.64 \plusminus 3.19\\
 & Width & \color{gray}{0.25 \plusminus 0.02} & \color{gray}{1.65 \plusminus 0.58} & \color{gray}{4.79 \plusminus 4.02} & \color{gray}{1.77 \plusminus 1.38} & \color{gray}{1.06 \plusminus 0.51} & 0.79 \plusminus 0.27\\
\bottomrule
\end{tabular}
}
\vspace{-0.5em}
\end{table*}
\vspace{-0.5em}
\paragraph{Baselines.} We selected the following baseline methods. \textbf{RED-SDS} (Recurrent Explicit Duration Switching Dynamical System) \cite{ansari2021deep} is a state-of-the-art Bayesian model that learns state transitions (implementation details in Appendix \ref{app:redsds}). We also use REDSDS as the base predictor for mode switching for our method in the experiments. \textbf{CP} is a generalization of conformal prediction to the online setting, also known as Online Sequential Split Conformal Prediction in \cite{zaffran2022adaptive}. We choose \textbf{ACI} \citep{gibbs2021adaptive} to represent online CP algorithms leveraging online optimization. \textbf{SPCI} \citep{xu2023sequential} and \textbf{HopCPT} \cite{auer2023hopcpt} are state-of-the-art CP algorithms that learns adaptive predictive intervals by quantile regression and Modern Hopfield Networks learned from the time series respectively. 
\vspace{-0.5em}
\paragraph{Metrics.}
We evaluate calibration and sharpness for each method. For \textit{calibration}, we report the empirical coverage on the test set. Coverage should be as close to the desired confidence level $1-\alpha$ as possible.
$
 \mathrm{Coverage}= \frac{1}{T} \sum_{t=1}^T \mathbbm{1} (y_t \in \Gamma_t (x_t))
$
   
For \textit{sharpness}, we report the average width or area of the Prediction Intervals (PI). The measure should be as small as possible while being valid (coverage maintains above the specified confidence level). 
$
\mathrm{Width} = \frac{1}{T} \sum_{t=1}^T  | \Gamma_t(x_t)|
$


\vspace{-0.5em}
\paragraph{Datasets.} The three synthetic datasets are designed with increasing randomness in mode changes, challenging the adaptivity of CPTC. \textbf{Bouncing Ball} is comprised of univariate time series encoding the height of a ball bouncing between two walls with constant velocity and elastic collisions, following \cite{dong2020collapsed, ansari2021deep}. The two switching states are going up/down, each associated with a different level of Gaussian noise added to observation (Bouncing Ball obs), or the underlying dynamics (Bouncing Ball dyn) which induces uncertainty in the phase as well.  
\textbf{3-mode system} is a switching linear dynamical system with 3 switching states, where each mode samples from a Poisson distribution for duration. 

For real-world datasets, the \textbf{Electricity} and \textbf{Traffic} datasets from \citep{Dua2017UCI} have hourly frequency and exhibit seasonality both in terms of the time series itself and volatility. The \textbf{honey bee trajectory} dataset  \citep{oh2008learning} is the most complex, composed of 4-dimensional trajectories with length averaging to 900 frames, where the bees' dance can be decomposed into ``left turn”, ``right turn” and ``waggle”.

Illustrations of the datasets and detailed description can be found in Appendix \ref{app:exp}. The warm-start window for synthetic and real datasets are 50 and 100 time steps respectively (excluding bees, whose $w=15$). The datasets represent a diverse set of scenarios to demonstrate the robustness and versatility of CPTC, capturing the complexity and variability of practical time-series forecasting problems.

\vspace{-0.5em}
\subsection{Analysis of Coverage and Sharpness Results}
\vspace{-0.2em}
Results for uncertainty quantification in table \ref{tab:baselines} show that we achieve significantly better validity on all datasets compared to baselines. RED-SDS's intervals fail due to lack of calibration, a common pitfall of probabilistic methods (shown also in \cite{xu2023sequential}). CP's under-coverage shows that the data are not exchangeable. In the presence of change points, ACI's coverage fluctuates dramatically and did not converging within the horizon (more visualizations in Appendix \ref{app:pictures}).  For SPCI and HopCPT, which also guarantees asymptotic validity, the invalidity is likely because they are designed for large datasets ($T \geq 10,000$ for HopCPT), and did not have enough data to react or learn useful residual patterns for the relatively short prediction horizon of our datasets ($T=200$ for synthetic datasets, 300 for electricity and traffic, and 60 for bees).  Figure \ref{fig:demonstration} provides a qualitative example of how the prediction intervals compare across different methods. Our method (purple shaded regions) does not over-cover during nonvolatile hours, nor under-cover during busy ones, as does ACI. 

\begin{figure}[t]
    \vspace{-0.5em}
    \centering
    \includegraphics[width=0.49\linewidth]{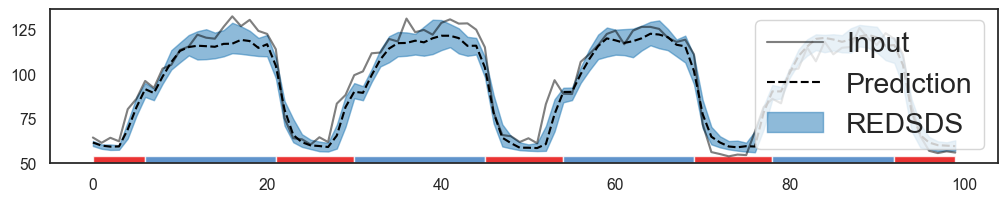}
    \includegraphics[width=0.49\linewidth]{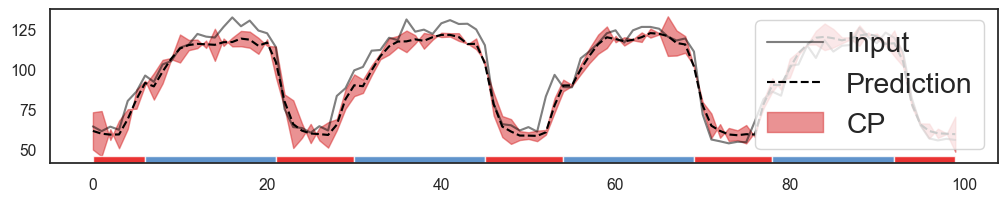}
    \includegraphics[width=0.49\linewidth]{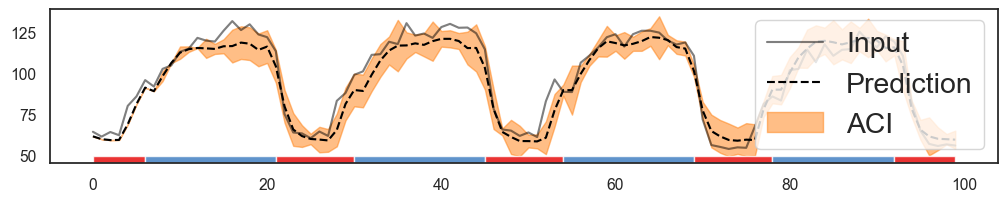}
    \includegraphics[width=0.49\linewidth]{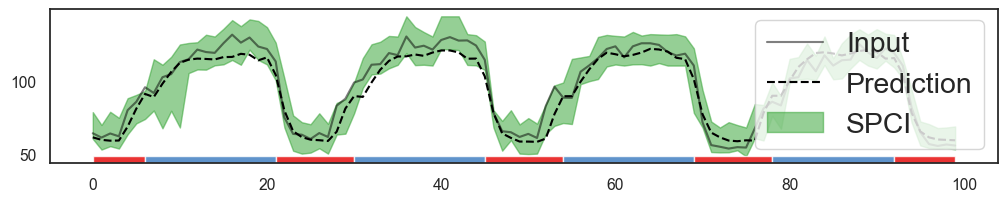}
    \includegraphics[width=0.49\linewidth]{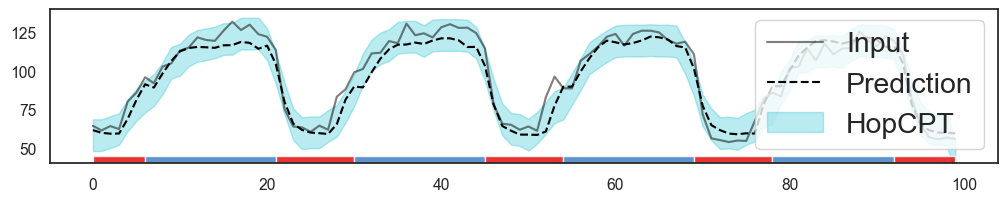}
    \includegraphics[width=0.49\linewidth]{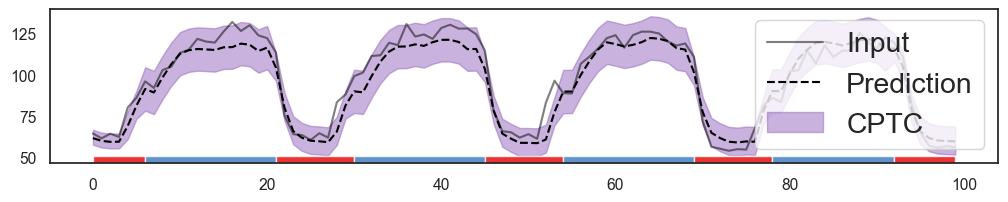}
    \vspace{-0.5em}
    \caption{\textbf{Visualization of prediction intervals on the Electricity hourly demand dataset.} The red and blue bars in the bottom reflects the underlying switching state of day and night. Our method (purple) adapts to different levels of volatility between day and night, and achieves stabler coverage over time, whereas ACI (yellow) over-covers during the night and under-covers at change points.}
    \label{fig:demonstration}
\vspace{-1em}
\end{figure}


\vspace{-0.6em}
\subsection{Ablation Studies}
\label{sec:accuracy}
\vspace{-0.5em}
\paragraph{Robustness to $p(\hat{z})$ accuracy.} We conduct ablation studies on the 3 synthetic datasets of which we have ground truth underlying states. As shown in table \ref{tab:z_exp}, it is evident that increasing the error in the state prediction does not affect the coverage performance. This verifies the robustness of our algorithm and theoretical results (Theorem \ref{thm:asymptotic_validity} and \ref{thm:imperfect_state_predictions}). Table \ref{tab:z_exp_area} further shows that the error of state prediction reflects on the width of the prediction intervals: while remaining the same level of coverage, the better the state predictions, the sharper the intervals.

\vspace{-1.5em}
\begin{minipage}{0.58\textwidth}
\begin{table}[H]
    \caption{Coverage of CPTC for using ground truth (GT) labels and with various levels of injected noise, on synthetic data with $T=200$. We can see that coverage performance does not change significantly with error in state prediction. }   
    \vspace{0.4em}
    \label{tab:z_exp}
    \centering
    \resizebox{\textwidth}{!}{%
    \begin{tabular}{c|c c c}
        \toprule
        & Ground truth state labels & GT w/ $20\%$ error & GT w/ $50\%$ error \\ 
        \midrule
        BB obs.  & 90.31 \plusminus 1.27 & 90.27 \plusminus 1.30 & 90.13 \plusminus 1.40 \\ 
        BB dyn.  & 90.34 \plusminus 0.93 & 90.32 \plusminus 0.96 & 90.33 \plusminus 1.05 \\ 
        3-mode  & 92.67 \plusminus 1.52 &  92.70 \plusminus 1.63  & 92.53 \plusminus 1.54 \\ 
        \bottomrule
    \end{tabular}
    }
\end{table}

\end{minipage}
\hfill
\begin{minipage}{0.38\textwidth}
\begin{table}[H]
    \centering
    \caption{Interval width under the same setting. While the coverage guarantee is the same, the more accurate the state prediction is, the sharper the intervals.}
    \vspace{0.4em}
    \label{tab:z_exp_area}
    \resizebox{\textwidth}{!}{%
    \begin{tabular}{c| c c c}
        \toprule
        & GT & w/ $20\%$ err & w/ $50\%$ err\\ 
        \midrule
        BB obs. & 3.61 \plusminus 0.93 & 3.75 \plusminus 0.99 & 3.94 \plusminus 1.17 \\ 
        BB dyn. & 2.09 \plusminus 0.60 & 2.15 \plusminus 0.62 & 2.18 \plusminus 0.65 \\ 
        3-mode & 2.04 \plusminus 0.67 &  2.19 \plusminus 0.76  & 2.27 \plusminus 0.89 \\ 
        \bottomrule
    \end{tabular}
    }       
\end{table}
\end{minipage}

\vspace{0.5em}
Additional ablation studies are presented in Appendix \ref{app:more_experiments}. On \textbf{Aggregation Methods} (by discretization as \Eqref{eq:aggregation-average} or union as \Eqref{eq:aggregation-union}), we found that on our benchmark datasets, union is a close proximation of the true objective, and CPTC achieves similar width and coverage on a sampled subset. For \textbf{Long Horizon Forecasting} where $T \geq 10,000$, our method still achieves valid and stable coverage, but have wider PI width compared to SPCI an HopCPT. They achieve this by frequently re-training their models online, which CPTC does not and is less computationally demanding.

\vspace{-0.3em}
\section{Conclusion and Discussion}
\vspace{-0.2em}

In this paper, we introduced Conformal Prediction for Time-series with Change points (CPTC), a novel conformal prediction algorithm for uncertainty quantification in non-stationary time series. By leveraging state information in switching dynamics systems models, CPTC offers improvements in adaptivity over existing conformal prediction methods. Our theoretical guarantees ensure validity \textit{without assumptions} on time-series stationarity or accurate
state predictions. Empirical results corroborate our theory and demonstrate the effectiveness of CPTC across diverse synthetic and real-world datasets, achieving robust coverage under distributional shifts with comparable sharpness compared to state-of-the-art baselines. The adaptivity advantage is most pronounced in shorter time series, when baselines require more data to react or learn temporal correlations. 

Limitations of CPTC include (1) slower convergence rate in scenarios with frequent or unpredictable state transitions, as the algorithm requires sufficient observations within each state to achieve optimal calibration and sharpness, (2) wider prediction interval width when applied to very long time series compared methods specialized for such scenarios (e.g. HopCPT), and (3) requirement of a state prediction model, making our method primarily applicable to SDS models and discrete SSMs. Future work could explore extending the framework to continuous states and the theoretical implications thereof, and studying the decision theoretic properties of using conformal prediction for safety-critical applications requiring real-time adaptation.


\clearpage

\section*{Acknowledgments and Disclosure of Funding}
This work was supported in part by the U.S. Army Research Office under Army-ECASE award W911NF-07-R-0003-03, the U.S. Department Of Energy, Office of Science, IARPA HAYSTAC Program, and NSF Grants \#2205093, \#2146343, \#2134274, CDC-RFA-FT-23-0069, DARPA AIE FoundSci and DARPA YFA.

\bibliography{ref.bib}

@article{hajihashemi2024multi,
  title={Multi-model ensemble conformal prediction in dynamic environments},
  author={Hajihashemi, Erfan and Shen, Yanning},
  journal={Advances in Neural Information Processing Systems},
  volume={37},
  pages={118678--118700},
  year={2024}
}

@inproceedings{branicky1994stability,
  title={Stability of switched and hybrid systems},
  author={Branicky, Michael S},
  booktitle={Proceedings of 1994 33rd IEEE conference on decision and control},
  volume={4},
  pages={3498--3503},
  year={1994},
  organization={IEEE}
}

@article{wu2025error,
  title={Error-quantified conformal inference for time series},
  author={Wu, Junxi and Hu, Dongjian and Bao, Yajie and Xia, Shu-Tao and Zou, Changliang},
  journal={arXiv preprint arXiv:2502.00818},
  year={2025}
}

@article{angelopoulos2024online,
  title={Online conformal prediction with decaying step sizes},
  author={Angelopoulos, Anastasios N and Barber, Rina Foygel and Bates, Stephen},
  journal={arXiv preprint arXiv:2402.01139},
  year={2024}
}

@article{zecchin2024localized,
  title={Localized adaptive risk control},
  author={Zecchin, Matteo and Simeone, Osvaldo},
  journal={Advances in Neural Information Processing Systems},
  volume={37},
  pages={8165--8192},
  year={2024}
}

@inproceedings{vovk2021retrain,
  title={Retrain or not retrain: Conformal test martingales for change-point detection},
  author={Vovk, Vladimir and Petej, Ivan and Nouretdinov, Ilia and Ahlberg, Ernst and Carlsson, Lars and Gammerman, Alex},
  booktitle={Conformal and Probabilistic Prediction and Applications},
  pages={191--210},
  year={2021},
  organization={PMLR}
}

@article{auer2023hopcpt,
  title={Conformal prediction for time series with modern hopfield networks},
  author={Auer, Andreas and Gauch, Martin and Klotz, Daniel and Hochreiter, Sepp},
  journal={Advances in Neural Information Processing Systems},
  volume={36},
  pages={56027--56074},
  year={2023}
}

@misc{linderman2023slds,
  author       = {Scott Linderman},
  title        = {The Nuts and Bolts of Probabilistic State Space Models: Part I -- Foundations},
  howpublished = {Lecture slides},
  institution  = {Stanford University},
  year         = {2023},
  note         = {\url{https://research.gatech.edu/sites/default/files/inline-files/2023-03-28%20SSM%20Tutorial%20GATech%20RS.pdf}}
}

@article{glaser2020recurrent,
  title={Recurrent switching dynamical systems models for multiple interacting neural populations},
  author={Glaser, Joshua and Whiteway, Matthew and Cunningham, John P and Paninski, Liam and Linderman, Scott},
  journal={Advances in neural information processing systems},
  volume={33},
  pages={14867--14878},
  year={2020}
}

@article{cetin2006short,
  title={Short-term traffic flow prediction with regime switching models},
  author={Cetin, Mecit and Comert, Gurcan},
  journal={Transportation Research Record},
  volume={1965},
  number={1},
  pages={23--31},
  year={2006},
  publisher={SAGE Publications Sage CA: Los Angeles, CA}
}

@misc{Dua2017UCI,
  author       = {Dua, Dheeru and Karra Taniskidou, Efi},
  title        = {{UCI} Machine Learning Repository},
  year         = {2017},
  url          = {http://archive.ics.uci.edu/ml},
  note         = {Accessed: \today},
  institution  = {University of California, Irvine, School of Information and Computer Sciences}
}

@article{yamamoto2018switching,
  title={A switching hybrid dynamical system: Toward understanding complex interpersonal behavior},
  author={Yamamoto, Yuji and Kijima, Akifumi and Okumura, Motoki and Yokoyama, Keiko and Gohara, Kazutoshi},
  journal={Applied Sciences},
  volume={9},
  number={1},
  pages={39},
  year={2018},
  publisher={MDPI}
}

@inproceedings{yu2004switching,
  title={Switching ARIMA model based forecasting for traffic flow},
  author={Yu, Guoqiang and Zhang, Changshui},
  booktitle={2004 IEEE International Conference on Acoustics, Speech, and Signal Processing},
  volume={2},
  pages={ii--429},
  year={2004},
  organization={IEEE}
}

@article{kato2015global,
  title={Global brain dynamics embed the motor command sequence of Caenorhabditis elegans},
  author={Kato, Saul and Kaplan, Harris S and Schr{\"o}del, Tina and Skora, Susanne and Lindsay, Theodore H and Yemini, Eviatar and Lockery, Shawn and Zimmer, Manuel},
  journal={Cell},
  volume={163},
  number={3},
  pages={656--669},
  year={2015},
  publisher={Elsevier}
}

@article{wang2007actively,
  title={Actively Q-switched fiber lasers: Switching dynamics and nonlinear processes},
  author={Wang, Yong and Xu, Chang-Qing},
  journal={Progress in Quantum Electronics},
  volume={31},
  number={3-5},
  pages={131--216},
  year={2007},
  publisher={Elsevier}
}

@article{fan2008machine,
  title={Machine learning based switching model for electricity load forecasting},
  author={Fan, Shu and Chen, Luonan and Lee, Wei-Jen},
  journal={Energy Conversion and Management},
  volume={49},
  number={6},
  pages={1331--1344},
  year={2008},
  publisher={Elsevier}
}

@article{karakatsani2008intra,
  title={Intra-day and regime-switching dynamics in electricity price formation},
  author={Karakatsani, Nektaria V and Bunn, Derek W},
  journal={Energy Economics},
  volume={30},
  number={4},
  pages={1776--1797},
  year={2008},
  publisher={Elsevier}
}

@article{song2014short,
  title={Short-term wind speed forecasting with Markov-switching model},
  author={Song, Zhe and Jiang, Yu and Zhang, Zijun},
  journal={Applied Energy},
  volume={130},
  pages={103--112},
  year={2014},
  publisher={Elsevier}
}

@article{rabinovich2006dynamical,
  title={Dynamical principles in neuroscience},
  author={Rabinovich, Mikhail I and Varona, Pablo and Selverston, Allen I and Abarbanel, Henry DI},
  journal={Reviews of modern physics},
  volume={78},
  number={4},
  pages={1213--1265},
  year={2006},
  publisher={APS}
}

@article{strachan2011measuring,
  title={Measuring the switching dynamics and energy efficiency of tantalum oxide memristors},
  author={Strachan, John Paul and Torrezan, Antonio C and Medeiros-Ribeiro, Gilberto and Williams, R Stanley},
  journal={Nanotechnology},
  volume={22},
  number={50},
  pages={505402},
  year={2011},
  publisher={IOP Publishing}
}

@inproceedings{volkhonskiy2017inductive,
  title={Inductive conformal martingales for change-point detection},
  author={Volkhonskiy, Denis and Burnaev, Evgeny and Nouretdinov, Ilia and Gammerman, Alexander and Vovk, Vladimir},
  booktitle={Conformal and Probabilistic Prediction and Applications},
  pages={132--153},
  year={2017},
  organization={PMLR}
}

@article{ansari2021deep,
  title={Deep explicit duration switching models for time series},
  author={Ansari, Abdul Fatir and Benidis, Konstantinos and Kurle, Richard and Turkmen, Ali Caner and Soh, Harold and Smola, Alexander J and Wang, Bernie and Januschowski, Tim},
  journal={Advances in Neural Information Processing Systems},
  volume={34},
  pages={29949--29961},
  year={2021}
}

@article{oh2008learning,
  title={Learning and inferring motion patterns using parametric segmental switching linear dynamic systems},
  author={Oh, Sang Min and Rehg, James M and Balch, Tucker and Dellaert, Frank},
  journal={International Journal of Computer Vision},
  volume={77},
  pages={103--124},
  year={2008},
  publisher={Springer}
}

@article{salinas2020deepar,
  title={DeepAR: Probabilistic forecasting with autoregressive recurrent networks},
  author={Salinas, David and Flunkert, Valentin and Gasthaus, Jan and Januschowski, Tim},
  journal={International Journal of Forecasting},
  volume={36},
  number={3},
  pages={1181--1191},
  year={2020},
  publisher={Elsevier}
}

@article{bastani2022practical,
  title={Practical adversarial multivalid conformal prediction},
  author={Bastani, Osbert and Gupta, Varun and Jung, Christopher and Noarov, Georgy and Ramalingam, Ramya and Roth, Aaron},
  journal={Advances in Neural Information Processing Systems},
  volume={35},
  pages={29362--29373},
  year={2022}
}

@article{zhou2024heterogeneous,
  title={Conformalized Adaptive Forecasting of Heterogeneous Trajectories},
  author={Zhou, Yanfei and Lindemann, Lars and Sesia, Matteo},
  journal={arXiv preprint arXiv:2402.09623},
  year={2024}
}

@inproceedings{liu2023graph,
  title={Graph switching dynamical systems},
  author={Liu, Yongtuo and Magliacane, Sara and Kofinas, Miltiadis and Gavves, Efstratios},
  booktitle={International Conference on Machine Learning},
  pages={21867--21883},
  year={2023},
  organization={PMLR}
}

@inproceedings{johnson2016composing,
  title={Composing graphical models with neural networks for structured representations and fast inference},
  author={Johnson, Matthew and Duvenaud, David and Wiltschko, Alexander B. and Adams, Ryan P. and Datta, Sandeep R.},
  booktitle={Advances in Neural Information Processing Systems},
  year={2016}
}

@article{tibshirani2019conformal,
  title={Conformal prediction under covariate shift},
  author={Tibshirani, Ryan J and Foygel Barber, Rina and Candes, Emmanuel and Ramdas, Aaditya},
  journal={Advances in neural information processing systems},
  volume={32},
  year={2019}
}

@article{sun2024data,
  title={Data-Driven Simulator for Mechanical Circulatory Support with Domain Adversarial Neural Process},
  author={Sun, Sophia and Chen, Wenyuan and Zhou, Zihao and Fereidooni, Sonia and Jortberg, Elise and Yu, Rose},
  journal={arXiv preprint arXiv:2405.18536},
  year={2024}
}

@inproceedings{garnelo2018conditional,
  title={Conditional Neural Processes},
  author={Garnelo, Marta and Rosenbaum, Dan and Maddison, Christopher and Ramalho, Tiago and Saxton, David and Shanahan, Murray and Teh, Yee Whye and Rezende, Danilo and Eslami, S. M. Ali},
  booktitle={Proceedings of the 35th International Conference on Machine Learning},
  year={2018}
}

@article{nie2023patchtst,
  title={PatchTST: A Time Series Transformer for Forecasting, Anomaly Detection, and Classification},
  author={Nie, Yuqi and Nguyen, Nam H and Sinthong, Phanwadee and Kalagnanam, Jayant},
  journal={arXiv preprint arXiv:2306.09364},
  year={2023}
}

@inproceedings{lim2021temporal,
  title={Temporal fusion transformers for interpretable multi-horizon time series forecasting},
  author={Lim, Bryan and Ar{\i}k, Sercan {\"O} and Loeff, Nicolas and Pfister, Tomas},
  booktitle={International Journal of Forecasting},
  volume={37},
  number={4},
  pages={1748--1764},
  year={2021},
  organization={Elsevier}
}

@article{benidis2022deep,
  title={Deep learning for time series forecasting: Tutorial and literature survey},
  author={Benidis, Konstantinos and Rangapuram, Syama Sundar and Flunkert, Valentin and Wang, Yuyang and Maddix, Danielle and Turkmen, Caner and Gasthaus, Jan and Bohlke-Schneider, Michael and Salinas, David and Stella, Lorenzo and others},
  journal={ACM Computing Surveys},
  volume={55},
  number={6},
  pages={1--36},
  year={2022},
  publisher={ACM New York, NY}
}

@article{linderman2016recurrent,
  title={Recurrent switching linear dynamical systems},
  author={Linderman, Scott W and Miller, Andrew C and Adams, Ryan P and Blei, David M and Paninski, Liam and Johnson, Matthew J},
  journal={arXiv preprint arXiv:1610.08466},
  year={2016}
}

@inproceedings{chen2022learning,
  title={Learning mixtures of linear dynamical systems},
  author={Chen, Yanxi and Poor, H Vincent},
  booktitle={International conference on machine learning},
  pages={3507--3557},
  year={2022},
  organization={PMLR}
}

@article{kalman1960new,
  title={A new approach to linear filtering and prediction problems},
  author={Kalman, Rudolph Emil},
  year={1960}
}

@book{aoki2013state,
  title={State space modeling of time series},
  author={Aoki, Masanao},
  year={2013},
  publisher={Springer Science \& Business Media}
}

@article{ackerson1970state,
  title={On state estimation in switching environments},
  author={Ackerson, Guy and Fu, K},
  journal={IEEE transactions on automatic control},
  volume={15},
  number={1},
  pages={10--17},
  year={1970},
  publisher={IEEE}
}

@inproceedings{sun2023copula,
  title={Copula Conformal prediction for multi-step time series prediction},
  author={Sun, Sophia Huiwen and Yu, Rose},
  booktitle={The Twelfth International Conference on Learning Representations},
  year={2023}
}

@article{gibbs2024conformal,
  title={Conformal inference for online prediction with arbitrary distribution shifts},
  author={Gibbs, Isaac and Cand{\`e}s, Emmanuel J},
  journal={Journal of Machine Learning Research},
  volume={25},
  number={162},
  pages={1--36},
  year={2024}
}

@article{angelopoulos2024pid,
  title={Conformal pid control for time series prediction},
  author={Angelopoulos, Anastasios and Candes, Emmanuel and Tibshirani, Ryan J},
  journal={Advances in neural information processing systems},
  volume={36},
  year={2024}
}

@article{alexandrov2020gluonts,
  title={Gluonts: Probabilistic and neural time series modeling in python},
  author={Alexandrov, Alexander and Benidis, Konstantinos and Bohlke-Schneider, Michael and Flunkert, Valentin and Gasthaus, Jan and Januschowski, Tim and Maddix, Danielle C and Rangapuram, Syama and Salinas, David and Schulz, Jasper and others},
  journal={Journal of Machine Learning Research},
  volume={21},
  number={116},
  pages={1--6},
  year={2020}
}

@inproceedings{zaffran2022adaptive,
  title={Adaptive conformal predictions for time series},
  author={Zaffran, Margaux and F{\'e}ron, Olivier and Goude, Yannig and Josse, Julie and Dieuleveut, Aymeric},
  booktitle={International Conference on Machine Learning},
  pages={25834--25866},
  year={2022},
  organization={PMLR}
}

@inproceedings{xu2023sequential,
  title={Sequential predictive conformal inference for time series},
  author={Xu, Chen and Xie, Yao},
  booktitle={International Conference on Machine Learning},
  pages={38707--38727},
  year={2023},
  organization={PMLR}
}

@inproceedings{xu2021conformal,
  title={Conformal prediction interval for dynamic time-series},
  author={Xu, Chen and Xie, Yao},
  booktitle={International Conference on Machine Learning},
  year={2021},
  organization={PMLR}
}

@inproceedings{Schaar2021conformaltime,
title={Conformal Time-series Forecasting},
author={Kamil{\.{e}} Stankevi{\v{c}}i{\={u}}t{\.{e}} and Ahmed Alaa and Mihaela van der Schaar},
booktitle={Advances in Neural Information Processing Systems},
year={2021}
}

@article{gibbs2021adaptive,
  title={Adaptive conformal inference under distribution shift},
  author={Gibbs, Isaac and Candes, Emmanuel},
  journal={Advances in Neural Information Processing Systems},
  volume={34},
  year={2021}
}

@book{vovk2005algorithmic,
  title={Algorithmic learning in a random world},
  author={Vovk, Vladimir and Gammerman, Alexander and Shafer, Glenn},
  year={2005},
  publisher={Springer Science \& Business Media}
}

@article{lei2018distribution,
  title={Distribution-free predictive inference for regression},
  author={Lei, Jing and G’Sell, Max and Rinaldo, Alessandro and Tibshirani, Ryan J and Wasserman, Larry},
  journal={Journal of the American Statistical Association},
  volume={113},
  number={523},
  pages={1094--1111},
  year={2018},
  publisher={Taylor \& Francis}
}

@inproceedings{gal2017concrete,
  title={Concrete dropout},
  author={Gal, Yarin and Hron, Jiri and Kendall, Alex},
  booktitle={NIPS},
  pages={3581--3590},
  year={2017}
}

@article{lakshminarayanan2017simple,
  title={Simple and scalable predictive uncertainty estimation using deep ensembles},
  author={Lakshminarayanan, Balaji and Pritzel, Alexander and Blundell, Charles},
  journal={Advances in neural information processing systems},
  volume={30},
  year={2017}
}

@inproceedings{wu2021quantifying,
  title={Quantifying Uncertainty in Deep Spatiotemporal Forecasting},
  author={Wu, Dongxia and Gao, Liyao and Chinazzi, Matteo and Xiong, Xinyue and Vespignani, Alessandro and Ma, Yi-An and Yu, Rose},
  booktitle={Proceedings of the 27th ACM SIGKDD Conference on Knowledge Discovery \& Data Mining},
  pages={1841--1851},
  year={2021}
}

@inproceedings{wang2019deep,
  title={Deep uncertainty quantification: A machine learning approach for weather forecasting},
  author={Wang, Bin and Lu, Jie and Yan, Zheng and Luo, Huaishao and Li, Tianrui and Zheng, Yu and Zhang, Guangquan},
  booktitle={Proceedings of the 25th ACM SIGKDD International Conference on Knowledge Discovery \& Data Mining},
  pages={2087--2095},
  year={2019}
}

@article{angelopoulos2021gentle,
  title={A gentle introduction to conformal prediction and distribution-free uncertainty quantification},
  author={Angelopoulos, Anastasios N and Bates, Stephen},
  journal={arXiv preprint arXiv:2107.07511},
  year={2021}
}

@article{aminikhanghahi2017survey,
  title={A survey of methods for time series change point detection},
  author={Aminikhanghahi, Samaneh and Cook, Diane J},
  journal={Knowledge and information systems},
  volume={51},
  number={2},
  pages={339--367},
  year={2017},
  publisher={Springer}
}

@inproceedings{montanez2015inertial,
  title={Inertial hidden markov models: Modeling change in multivariate time series},
  author={Montanez, George and Amizadeh, Saeed and Laptev, Nikolay},
  booktitle={Proceedings of the AAAI Conference on Artificial Intelligence},
  volume={29},
  number={1},
  year={2015}
}

@inproceedings{dong2020collapsed,
  title={Collapsed amortized variational inference for switching nonlinear dynamical systems},
  author={Dong, Zhe and Seybold, Bryan and Murphy, Kevin and Bui, Hung},
  booktitle={International Conference on Machine Learning},
  pages={2638--2647},
  year={2020},
  organization={PMLR}
}

@article{cherian2024large,
  title={Large language model validity via enhanced conformal prediction methods},
  author={Cherian, John J and Gibbs, Isaac and Cand{\`e}s, Emmanuel J},
  journal={arXiv preprint arXiv:2406.09714},
  year={2024}
}

@article{lindemann2023safe,
  title={Safe planning in dynamic environments using conformal prediction},
  author={Lindemann, Lars and Cleaveland, Matthew and Shim, Gihyun and Pappas, George J},
  journal={IEEE Robotics and Automation Letters},
  year={2023},
  publisher={IEEE}
}

@article{angelopoulos2022conformal,
  title={Conformal risk control},
  author={Angelopoulos, Anastasios N and Bates, Stephen and Fisch, Adam and Lei, Lihua and Schuster, Tal},
  journal={arXiv preprint arXiv:2208.02814},
  year={2022}
}

@article{angelopoulos2024theoretical,
  title={Theoretical foundations of conformal prediction},
  author={Angelopoulos, Anastasios N and Barber, Rina Foygel and Bates, Stephen},
  journal={arXiv preprint arXiv:2411.11824},
  year={2024}
}
\bibliographystyle{abbrv}

\appendix
\clearpage
\onecolumn

\section{Theoretical Results and Proofs}
\label{app:proofs}

\subsection{Setup and Notation}

Let \( z_t \in \mathcal{Z} = \{1, \dots, K\} \) denote the unobserved discrete mode,  \( x_t \in \mathcal{X} \) denote the continuous state, and \( y_t \in \mathcal{Y} \) the observation at time $t$. We assume we have access to a probabilistic model \( \hat{p}(z_t | X_{1:t-1}) \) that estimates the mode $\hat{z}_t$.

For a target coverage rate \( 1 - \alpha \), let \( \Gamma_{t}(X_t) \) be the prediction set at time \( t \). We define the miscoverage indicator \( M_t \) as follows, and aim to bound the overall miscoverage rate over \( T \) time steps:
\[
M_t = \frac{1}{T} \sum_{t=1}^T err_t, \quad \text{where} \quad err_t := \mathbbm{1}\big(Y_t \notin \Gamma_{t}(X_t)\big),
\]

\subsection{Proofs: Lemma}

In practical scenarios, the state prediction model may not be perfectly accurate. We now analyze the impact of imperfect state predictions on the coverage guarantees of the CPTC algorithm.
 
We derive the following lemma from the convergence result of Proposition 4.1 in \cite{gibbs2021adaptive} with two additional assumptions also according to \cite{gibbs2021adaptive}: (1) the existence of a single fixed optimal target $a^* \in [0,1]$, and (2) $\mathbb{E} [err_t | a_t] = M(a_t)$. These are reasonable assumptions that allow us to quantify distribution shifts and are necessary for analyzing adaptation behavior  (whereas proposition 4.1 itself is a worst-case upper bound.) We refer the readers to the original paper for the complete derivation.

\begin{lemma}[Adaptive Conformal Inference Miscoverage Bound Within Predicted States]\label{lemma:aci_within_states} Let \( \mathcal{T}_z = \{ t \in \{1, \dots, T\} : \hat{z}_t = z \} \) denote the set of times when the predicted state is \( z \). For any \( T > 1 \) and predicted state \( \hat{z}_t = z \), the Adaptive Conformal Inference (ACI) algorithm ensures that there exists a constant \( \delta_{z, T} \) such that:
 \[ 
\left| \frac{1}{|\mathcal{T}_z|} \sum_{t \in \mathcal{T}_z} \mathbb{E}[err_t] - \alpha \right| \leq \delta_{z, T}
\]

Explicitly, 
\[
\delta_{z,T} =  \frac{1 - (1-c\gamma)^{|\mathcal{T}_z|}}{|\mathcal{T}_z| \gamma} |\alpha - \alpha^{*}_z|
\]
where 
\begin{itemize}
    \item  $\alpha^*_z$ is the optimal miscoverage target for timesteps $\mathcal{T}_z$.
    \item $\gamma$ is the step size
    \item $c$ the miscoverage constant.
\end{itemize}  Note that $\delta_{z,T}$ decays at a rate of $\mathcal{O}(T)$ and satisfies
$ \lim_{\mathcal{T}_z \to \infty} \delta_{z,T} = 0$.

\end{lemma}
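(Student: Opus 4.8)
\textbf{Proof plan for Lemma \ref{lemma:aci_within_states}.}

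The plan is to treat the subsequence of time steps $\mathcal{T}_z = \{t : \hat{z}_t = z\}$, in their natural order, as a standalone run of ACI with step size $\gamma$ on the nonconformity scores $\gS_z$. The key observation that makes this work is that in Algorithm \ref{alg:cptc} the update $\alpha_{z,t+1} \leftarrow \alpha_{z,t} + \gamma(\alpha - err_t)$ is applied to the mode-$z$ parameter \emph{only} at the time steps when $\hat{z}_t = z$; at all other steps $\alpha_{z,t}$ is frozen. Hence if we reindex $\mathcal{T}_z = \{t_1 < t_2 < \cdots < t_{|\mathcal{T}_z|}\}$ and write $\tilde\alpha_k := \alpha_{z,t_k}$, the sequence $(\tilde\alpha_k)$ obeys exactly the ACI recursion $\tilde\alpha_{k+1} = \tilde\alpha_k + \gamma(\alpha - err_{t_k})$. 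So the whole statement reduces to the convergence guarantee for a single ACI chain of length $|\mathcal{T}_z|$, which is Proposition 4.1 of \cite{gibbs2021adaptive} under the two stated additional assumptions (existence of a fixed optimal target $\alpha^*_z$, and $\E[err_{t_k} \mid \tilde\alpha_k] = M(\tilde\alpha_k)$).

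First I would make the reduction above precise, noting that the only subtlety is that the calibration set $\gS_z$ grows over time; but since Proposition 4.1 of \cite{gibbs2021adaptive} is stated for a time-varying score distribution and the extra assumptions only pin down the existence of a single fixed optimum $\alpha^*_z$ and the conditional-expectation identity, this causes no problem --- the argument is agnostic to \emph{why} $M_{z,t}$ changes. Second, I would invoke the ACI contraction estimate: under those assumptions one shows $|\E[\tilde\alpha_k] - \alpha^*_z| \le (1-c\gamma)^{k-1}|\tilde\alpha_1 - \alpha^*_z|$ for a miscoverage constant $c$, essentially because $M(\cdot)$ is monotone and bounded away from degeneracy so each step contracts the expected gap by a factor $(1-c\gamma)$. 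Third, averaging $\E[err_{t_k}] = \alpha + \tfrac{1}{\gamma}(\E[\tilde\alpha_k] - \E[\tilde\alpha_{k+1}])$ telescopes: $\frac{1}{|\mathcal{T}_z|}\sum_{k} \E[err_{t_k}] = \alpha + \frac{1}{|\mathcal{T}_z|\gamma}(\E[\tilde\alpha_1] - \E[\tilde\alpha_{|\mathcal{T}_z|+1}])$, and bounding the bracketed telescoped term by the geometric estimate gives $|\tfrac{1}{|\mathcal{T}_z|}\sum_k \E[err_{t_k}] - \alpha| \le \frac{1-(1-c\gamma)^{|\mathcal{T}_z|}}{|\mathcal{T}_z|\gamma}|\alpha - \alpha^*_z|$, since $\tilde\alpha_1 = \alpha_{z,0} = \alpha$ by initialization. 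Finally, since $|1-c\gamma|<1$ for an appropriate step size, the numerator is bounded by $1$ and the denominator grows linearly, so $\delta_{z,T} = \mathcal{O}(1/|\mathcal{T}_z|) \to 0$, giving the claimed asymptotic.

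The main obstacle is the bookkeeping in the reduction step: one must be careful that the indicator $err_{t_k}$ used in the mode-$z$ update is the \emph{aggregated} miscoverage $\mathbbm{1}\{y_t \notin \Gamma_t(x_t)\}$ rather than the state-specific one $\mathbbm{1}\{y_t \notin \Gamma_{z,t}(x_t)\}$, so strictly speaking the ``feedback'' driving the chain is not exactly the quantity ACI's analysis assumes. I would handle this by working under the assumption (used throughout this appendix, and consistent with the weighted-average aggregation of \Eqref{eq:aggregation-average}) that conditioned on $\hat z_t = z$ the aggregated set coincides with or dominates $\Gamma_{z,t}$ in the relevant sense, so that $\E[err_t \mid \tilde\alpha_k] = M_{z,t}(\tilde\alpha_k)$ still holds; this is exactly the role of the second additional assumption cited from \cite{gibbs2021adaptive}. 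With that in hand the remaining steps are the standard ACI telescoping-plus-contraction computation, which I would not reproduce in full but cite to \cite{gibbs2021adaptive}.
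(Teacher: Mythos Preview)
Your proposal is correct and aligns with the paper's approach: the paper does not give a standalone proof of this lemma but simply states that it is derived from Proposition~4.1 of \cite{gibbs2021adaptive} under the two additional assumptions you name (a fixed optimal target $\alpha^*_z$ and $\E[err_t \mid \alpha_t] = M(\alpha_t)$), and refers the reader to that paper for the full derivation. Your reduction of the subsequence $\mathcal{T}_z$ to a standalone ACI chain, followed by the contraction-plus-telescoping computation, is exactly the mechanism that citation is meant to encapsulate, and is in fact more explicit than what the paper itself records.
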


\subsection{Proof of Theorem \ref{thm:asymptotic_validity}}

\begin{theorem*}[
\ref{thm:asymptotic_validity} Asymptotic validity of CPTC ]
For any sample size \( T \geq 1 \), without placing any assumptions, the CPTC algorithm ensures that:
\[  
\left| \frac{1}{T} \sum_{t=1}^T \mathbb{E}[err_t] - \alpha \right| \leq \frac{1}{T} \sum_{z=1}^K    |\mathcal{T}_z | \delta_{z,T}  = \frac{1}{T} \sum_{z=1}^K  \frac{ 1 - (1-c\gamma)^{|\mathcal{T}_z|}  }{\gamma} |\alpha - \alpha^{*}_z|
\]
which decays at a rate of $\mathcal{O}(T)$ and satisfies
 \[
 \lim_{T \to \infty} \left| \frac{1}{T} \sum_{t=1}^T \mathbb{E}[err_t] - \alpha \right|= 0.
 \]
\end{theorem*}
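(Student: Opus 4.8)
The plan is to reduce the claimed global miscoverage bound to the per-predicted-state deviations already supplied by Lemma~\ref{lemma:aci_within_states}. The key structural fact is that the sampled predicted states partition the horizon: at each step $t\in\{1,\dots,T\}$ Algorithm~\ref{alg:cptc} draws exactly one $\hat z_t\in\gZ$ (line~14), so the index sets $\mathcal{T}_z=\{t:\hat z_t=z\}$ are pairwise disjoint with union $\{1,\dots,T\}$, and $\sum_{z=1}^K|\mathcal{T}_z|=T$ (states never predicted contribute empty blocks that are simply dropped). I would argue conditionally on the realized sequence $\hat z_{1:T}$, so that the $|\mathcal{T}_z|$ on the right-hand side are fixed numbers and the only remaining randomness is in the $err_t$'s.

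Using the partition, I would rewrite $\alpha=\tfrac1T\sum_{z=1}^K|\mathcal{T}_z|\,\alpha$ and regroup the error sum by predicted state,
\[
\frac1T\sum_{t=1}^T\mathbb{E}[err_t]-\alpha=\frac1T\sum_{z=1}^K\Bigl(\sum_{t\in\mathcal{T}_z}\mathbb{E}[err_t]-|\mathcal{T}_z|\,\alpha\Bigr),
\]
then apply the triangle inequality over the $K$ blocks and pull $|\mathcal{T}_z|$ out of each,
\[
\Bigl|\frac1T\sum_{t=1}^T\mathbb{E}[err_t]-\alpha\Bigr|\le\frac1T\sum_{z=1}^K|\mathcal{T}_z|\;\Bigl|\frac1{|\mathcal{T}_z|}\sum_{t\in\mathcal{T}_z}\mathbb{E}[err_t]-\alpha\Bigr|.
\]
The inner factor is exactly the within-predicted-state average miscoverage deviation: the confidence levels $(\alpha_{z,t})_{t\in\mathcal{T}_z}$ form an ACI iterate updated only on the steps of $\mathcal{T}_z$ with feedback $err_t$, so Lemma~\ref{lemma:aci_within_states} bounds it by $\delta_{z,T}=\frac{1-(1-c\gamma)^{|\mathcal{T}_z|}}{|\mathcal{T}_z|\gamma}\,|\alpha-\alpha^*_z|$. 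Substituting and cancelling $|\mathcal{T}_z|$ yields the stated bound $\tfrac1T\sum_{z=1}^K\tfrac{1-(1-c\gamma)^{|\mathcal{T}_z|}}{\gamma}|\alpha-\alpha^*_z|$.

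For the rate and the limit, I would bound each summand crudely by $1-(1-c\gamma)^{|\mathcal{T}_z|}\le 1$ (using $0<c\gamma<1$) and $|\alpha-\alpha^*_z|\le 1$, so the right-hand side is at most $K/(\gamma T)$, which is $\mathcal{O}(1/T)$ since $K,\gamma,c$ are fixed and therefore tends to $0$, giving $\lim_{T\to\infty}\bigl|\tfrac1T\sum_{t=1}^T\mathbb{E}[err_t]-\alpha\bigr|=0$. If an unconditional statement is desired, one then takes an outer expectation over $\hat z_{1:T}$, which is where Assumption~\ref{assump:stationary_distribution} on the long-run stability of the state and state-prediction distributions can be invoked.

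The argument is almost entirely bookkeeping, so the one point that requires care rather than being a genuine obstacle is checking that Lemma~\ref{lemma:aci_within_states} applies verbatim to the state-$z$ subsequence. That subsequence is driven by the \emph{aggregated} miscoverage indicator $err_t=\mathbbm{1}\{y_t\notin\Gamma_t(x_t)\}$ rather than a state-local one, so I must verify that the two regularity conditions behind the lemma---existence of a fixed optimal target $\alpha^*_z$ and $\mathbb{E}[err_t\mid\alpha_t]=M(\alpha_t)$, both inherited from \cite{gibbs2021adaptive}---still hold for this feedback. This is exactly where the ``weighted average aggregation'' hypothesis enters: under \Eqref{eq:aggregation-average} the set $\Gamma_t$, and hence $err_t$, is a well-defined deterministic function of the current $\alpha_{z,\cdot}$'s and the weights $\hat p(z_t=z\mid x_t)$, which makes the conditional-mean condition meaningful and lets the one-dimensional ACI analysis run separately on each $\mathcal{T}_z$.
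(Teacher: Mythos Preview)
Your proposal is correct and follows essentially the same approach as the paper: partition the horizon by predicted state, apply Lemma~\ref{lemma:aci_within_states} to each block, and combine via the triangle inequality. Your limit argument via the crude bound $K/(\gamma T)$ is in fact slightly more direct than the paper's, which invokes Assumption~\ref{assump:stationary_distribution} to argue that each $|\mathcal{T}_z|$ grows proportionally with $T$; your explicit discussion of why the lemma applies to the subsequence driven by the aggregated indicator $err_t$ is also a point the paper leaves implicit.
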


To prove theorem \ref{thm:asymptotic_validity}, we assume that both the distribution of the underlying modes $z$ and the predictions $\hat{z}$ is stationary (Assumption \ref{assump:stationary_distribution}). 

\begin{assumption}[Stationary Distribution of States]\label{assump:stationary_distribution}
The sequence of true states \( \{z_t\} \) has a stationary distribution \( \pi(z) \), and the sequence of predicted states \( \{\hat{z}_t\} \) also has a stationary distribution \( \hat{\pi}(z) \). Specifically:
\[
\forall z, \in \mathcal{Z}, \quad \lim_{t \to \infty} p(z_t = z) = \pi(z), \quad \lim_{t \to \infty} p(\hat{z}_t = z) = \hat{\pi}(z),
\]
\end{assumption}

This assumption is well-justified in practical scenarios where the sequence of observations and predictions arises from processes that, while non-stationary over short intervals, exhibit stable long-term behavior. In time-series or dynamic environments, regularities in data generation or prediction models often lead to empirically observed stationarity in distributions. While restrictive, this assumption aligns with prior work in sequential and online prediction frameworks, where stationarity assumptions are standard to derive theoretical guarantees.

\begin{proof}[Proof of Theorem \ref{thm:asymptotic_validity}]
Let \( \mathcal{T}_z = \{ t \in \{1, \dots, T\} : \hat{z}_t = z \} \) represent the set of timesteps when the predicted state is \( z \). For the total number of timesteps \( T \), the overall miscoverage error can be expressed as:
\begin{align*}
\frac{1}{T} \sum_{t=1}^T \mathbb{E}[err_t] & = \frac{1}{T} \sum_{z=1}^K \sum_{t \in \mathcal{T}_z} \mathbb{E}[err_t]\\
&= \frac{1}{T} \sum_{z=1}^K |\mathcal{T}_z| \left( \frac{1}{|\mathcal{T}_z|} \sum_{t \in \mathcal{T}_z} \mathbb{E}[err_t] \right).
\end{align*}

From Lemma \ref{lemma:aci_within_states}, for any predicted state \( z \), the miscoverage within \( \mathcal{T}_z \) satisfies:
\[
\left| \frac{1}{|\mathcal{T}_z|} \sum_{t \in \mathcal{T}_z} \mathbb{E}[err_t] - \alpha \right| \leq \delta_{z,T},
\]
where:
\[
\delta_{z,T} = \frac{1 - (1-c\gamma)^{|\mathcal{T}_z|}}{|\mathcal{T}_z| \gamma} |\alpha - \alpha^{*}_z|.
\]

Substituting this bound into the overall miscoverage error, we obtain:
\[
\left| \frac{1}{T} \sum_{t=1}^T \mathbb{E}[err_t] - \alpha \right| \leq \frac{1}{T} \sum_{z=1}^K |\mathcal{T}_z| \delta_{z,T}.
\]

Expanding \( \delta_{z,T} \), the overall bound becomes:
\[
\left| \frac{1}{T} \sum_{t=1}^T \mathbb{E}[err_t] - \alpha \right| \leq \frac{1}{T} \sum_{z=1}^K \frac{ 1 - (1-c\gamma)^{|\mathcal{T}_z|}}{\gamma} |\alpha - \alpha^{*}_z|.
\]

By assumption \ref{assump:stationary_distribution}, as \( T \to \infty \), the size of \( |\mathcal{T}_z| \) for each state \( z \) grows proportionally to \( T \). The term \( 1 - (1-c\gamma)^{|\mathcal{T}_z|} \) approaches \( 1 \), while \( |\mathcal{T}_z| / T \) converges to the relative proportion of time spent in state \( z \), denoted as \( p_z \). Hence, as \( T \to \infty \), the overall bound \(
\left| \frac{1}{T} \sum_{t=1}^T \mathbb{E}[err_t] - \alpha \right| \to  0\).

Thus, the CPTC algorithm ensures that the overall error converges to the target coverage level \( \alpha \) at a rate of \( \mathcal{O}(T) \), satisfying:
\[
\lim_{T \to \infty} \left| \frac{1}{T} \sum_{t=1}^T \mathbb{E}[err_t] - \alpha \right| = 0.
\]
\end{proof}

\subsection{Proof of Theorem \ref{thm:imperfect_state_predictions}}
\begin{theorem*}[
\ref{thm:imperfect_state_predictions} Finite-Sample Miscoverage Bound with Imperfect State Predictions] 
For any sample size \( T \geq 1 \), the CPTC algorithm ensures that:
\[
\left| \frac{1}{T} \sum_{t=1}^{T} \mathbb{E}[err_t]  - \alpha \right| \leq \epsilon \cdot \max_z \delta_{z,T}
\]
where:
\begin{itemize}
    \item \(err_t := \mathbbm{1}\big(Y_t \notin \Gamma_{t}(X_t)\big)\), the miscoverage rate. 
    \item \( \mathcal{T}_z = \{ t \in \{1, \dots, T\} : \hat{z}_t = z \} \), the set of times when the predicted state is \( z \).
    \item \( \epsilon = \mathbb{P}(\hat{z}_t \neq z_t) \) is the misclassification rate of the state predictions.
    \item \( \delta_{z,T} \) is the deviation from \( \alpha \) within any predicted state $z$ as in  Lemma \ref{lemma:aci_within_states}. 
\end{itemize}
\end{theorem*}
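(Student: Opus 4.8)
The plan is to bound each per-timestep expected miscoverage $\mathbb{E}[err_t]$ by conditioning on whether the predicted state is correct, and then average over $t$. Writing $\epsilon = P(\hat z_t \neq z_t)$ (exactly as in the statement; a uniform bound $\sup_t P(\hat z_t \neq z_t) \le \epsilon$ would suffice equally well), the law of total expectation gives
\[
\mathbb{E}[err_t] = (1-\epsilon)\,\mathbb{E}\!\left[err_t \mid \hat z_t = z_t\right] + \epsilon\,\mathbb{E}\!\left[err_t \mid \hat z_t \neq z_t\right].
\]
First I would argue that on the event $\{\hat z_t = z_t\}$ the state-specific set $\Gamma_{z,t}$ is the (online-calibrated) conformal interval for the \emph{correct} regime, so that $\mathbb{E}[err_t \mid \hat z_t = z_t] = \alpha$; this is precisely the baseline against which the deviation $\delta_{z,T}$ of Lemma~\ref{lemma:aci_within_states} is measured. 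Second, since CPTC runs a full ACI update within each \emph{predicted} state regardless of correctness, Lemma~\ref{lemma:aci_within_states} controls the misclassified contribution: $\big|\mathbb{E}[err_t\mid\hat z_t\neq z_t] - \alpha\big| \le \max_z \delta_{z,T}$.

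Combining the two observations, the $\alpha$-terms cancel, yielding $\mathbb{E}[err_t] - \alpha = \epsilon\big(\mathbb{E}[err_t\mid\hat z_t\neq z_t] - \alpha\big)$ and hence $|\mathbb{E}[err_t]-\alpha| \le \epsilon\max_z\delta_{z,T}$ for every $t$. Averaging over $t=1,\dots,T$ and applying the triangle inequality gives
\[
\left|\frac1T\sum_{t=1}^T\mathbb{E}[err_t]-\alpha\right| \;\le\; \frac1T\sum_{t=1}^T \epsilon\max_z\delta_{z,T} \;=\; \epsilon\max_z\delta_{z,T},
\]
which is the claimed inequality (note this is the refinement of Theorem~\ref{thm:asymptotic_validity}'s $\max_z\delta_{z,T}$ bound by the factor $\epsilon$). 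The final limit claim, $\delta_{z,T}\to0$ for all $z$, then follows directly from the closed form in Lemma~\ref{lemma:aci_within_states} once Assumption~\ref{assump:stationary_distribution} guarantees $|\mathcal T_z|\to\infty$ for every state $z$.

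I expect the crux to be the first step: justifying $\mathbb{E}[err_t\mid\hat z_t = z_t] = \alpha$. The subtlety is that the calibration pool $\gS_z$ driving $\Gamma_{z,t}$ still contains nonconformity scores from past timesteps where the \emph{predicted} state was $z$ but the true state was not, so conditioning on current correctness does not by itself make the pool exchangeable with $(x_t,y_t)$. I would resolve this either (i) by invoking the same online-adaptation argument underlying Lemma~\ref{lemma:aci_within_states} — the $\alpha_{z,t}$ update absorbs this contamination, and isolating the contaminated $\epsilon$-fraction is exactly the job of the second step — or (ii) by adding the within-state exchangeability assumption implicit in the surrounding text and then quoting the split-conformal argument of Proposition~\ref{prop:exchangable}; in the write-up I would state clearly which route is taken. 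Everything downstream of that step is a routine conditioning-and-averaging computation.
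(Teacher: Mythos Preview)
Your proposal is correct and follows essentially the same approach as the paper: partition (equivalently, condition) on whether the predicted state matches the true state, assert $\mathbb{E}[err_t]=\alpha$ on the correctly-classified set, bound $|\mathbb{E}[err_t]-\alpha|\le\delta_{z,T}$ on the misclassified set via Lemma~\ref{lemma:aci_within_states}, and combine by the triangle inequality. The paper phrases this as a deterministic partition of the time indices $\mathcal{C}\cup\mathcal{I}$ with $|\mathcal{I}|/T=\epsilon$ rather than your per-$t$ conditional-expectation decomposition, and it justifies the crux step with the single line ``since there are no distribution shifts within each state, we have the same expected error from calibration data''; your two proposed routes for this step are both more explicit than what the paper provides.
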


\begin{proof}[Proof of Theorem \ref{thm:imperfect_state_predictions}]

First we will partition time into correct vs.\ incorrect predictions. Define:
\[
  \mathcal{C} 
  \;=\; \{\,t : \hat{z}_t = z_t\},
  \quad
  \mathcal{I} 
  \;=\; \{\,t : \hat{z}_t \neq z_t\},
  \quad
  \text{and}
  \quad
  \frac{|\mathcal{I}|}{T} \;=\;\epsilon.
\]

\smallskip

\textbf{Calibrated expected coverage on correctly predicted states \(\mathcal{C}\).}
Since there are no distribution shifts within each state, we have the same expected error from calibration data. Therefore,
\[
  \mathbb{E}[\mathrm{err}_t] 
  \;=\; \alpha
  \quad\Longrightarrow\quad
  \mathbb{E}[\mathrm{err}_t] - \alpha 
  \;=\; 0.
\]

\smallskip

\textbf{Bounded deviation on incorrectly predicted states \(\mathcal{I}\).}
On \(\mathcal{I}\), we have 
\(\bigl|\mathbb{E}[\mathrm{err}_t] - \alpha\bigr| \le \delta_{z,T}\).  
Therefore,
\[
  \sum_{t=1}^T 
    \bigl|\mathbb{E}[\mathrm{err}_t] - \alpha\bigr|
  \;=\;
  \sum_{t \in \mathcal{C}} \bigl|\mathbb{E}[\mathrm{err}_t] - \alpha\bigr|
  \;+\;\;
  \sum_{t \in \mathcal{I}} \bigl|\mathbb{E}[\mathrm{err}_t] - \alpha\bigr|
  \;\;\le\;\;
  0 \;+\; |\mathcal{I}|\;\delta_{z,T}
  \;=\; 
  |\mathcal{I}|\;\delta_{z,T}.
\]
Since \(\tfrac{|\mathcal{I}|}{T}=\epsilon\), dividing by \(T\) yields
\[
  \frac{1}{T} 
  \sum_{t=1}^T 
    \bigl|\mathbb{E}[\mathrm{err}_t] - \alpha\bigr|
  \;\;\le\;\;
  \epsilon\,\delta_{z,T}.
\]

\smallskip

\textbf{Combine to achieve the overall bound.}
By the triangle inequality, we know that
\[
  \left|
    \frac{1}{T}\sum_{t=1}^T \mathbb{E}[\mathrm{err}_t] - \alpha
  \right|
  \;\;\le\;\;
  \frac{1}{T}\sum_{t=1}^T 
    \bigl|\mathbb{E}[\mathrm{err}_t] - \alpha\bigr|.
\]
Hence, from the previous step,
\[
  \left|
    \frac{1}{T}\sum_{t=1}^T \mathbb{E}[\mathrm{err}_t] - \alpha
  \right|
  \;\;\le\;\;
  \epsilon\,\delta_{z,T}.
\]
Since on \(\mathcal{I}\) the predicted state \(\hat{z}_t\) could vary among different \(z\) values, we take
\(\delta_{z,T} \le \max_{z}\,\delta_{z,T}\). 
Thus,
\[
  \left|
    \frac{1}{T}\sum_{t=1}^T \mathbb{E}[\mathrm{err}_t] - \alpha
  \right|
  \;\;\le\;\;
  \epsilon\,\max_{z}\,\delta_{z,T}.
\]
\end{proof}

\subsection{Corollaries}

Theorem \ref{thm:imperfect_state_predictions} indicates that the overall miscoverage rate deviates from the desired level \( \alpha \) by at most \( \epsilon \cdot \max_z \delta_{z,T} \). The misclassifications rate \( \epsilon \) contributes directly to the deviation, and the calibration error \( \delta_{z,T} \) within each predicted state decreases as more data becomes available. As \( T \) increases and \( \max_z \delta_{z,T} \to 0 \), the deviation is primarily governed by the misclassification rate \( \epsilon \). 

\vspace{0.5em}
\begin{corollary}[Zero miscoverage rate under Accurate State Prediction]
 If predicted state probabilities are accurate \( \hat{p}(z_t | x_{1:t-1}) = p(z_t | x_{1:t-1}) \), then \( \epsilon = 0 \), therefore $\mathbb{E}[err_t] = \alpha$ for all $T$, achieving optimal performance of online conformal prediction. 
\end{corollary}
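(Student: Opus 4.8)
The plan is to obtain this as an immediate specialization of Theorem~\ref{thm:imperfect_state_predictions}, together with a short refinement that upgrades its time-averaged conclusion to a per-timestep one. First I would unpack what ``accurate state probabilities'' buys us: when $\hat p(z_t\mid x_{1:t-1})=p(z_t\mid x_{1:t-1})$, the state sampled in the coverage-tracking step (line~14 of Algorithm~\ref{alg:cptc}) is drawn from the true conditional law of $z_t$, so the predicted label agrees with the realized mode and the misclassification rate $\epsilon=\mathbb{P}(\hat z_t\neq z_t)$ vanishes. Here one should be careful that $z_t$ is identifiable from the history --- i.e.\ that the correct conditional acts as a point mass on the realized mode --- since this is the only place the hypothesis does real work; if $z_t$ were genuinely random given $x_{1:t-1}$ one would instead have to argue at the level of the aggregation weights in \eqref{eq:aggregation-average} rather than through an exact label match.

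Second, I would plug $\epsilon=0$ into the bound of Theorem~\ref{thm:imperfect_state_predictions}:
\[
\Bigl|\tfrac1T\textstyle\sum_{t=1}^T\mathbb{E}[err_t]-\alpha\Bigr|\;\le\;\epsilon\cdot\max_z\delta_{z,T}\;=\;0,
\]
so the time-averaged miscoverage is exactly $\alpha$ for every $T\ge 1$, irrespective of how large the within-state deviations $\delta_{z,T}$ might otherwise be.

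Third, to get the stronger pointwise statement $\mathbb{E}[err_t]=\alpha$ for all $t$, I would reuse the internal structure of the proof of Theorem~\ref{thm:imperfect_state_predictions} rather than just its statement. With $\epsilon=0$ the ``incorrectly predicted'' set $\mathcal I$ is empty and every index $t$ lies in $\mathcal C$; on $\mathcal C$ the predicted mode equals the true mode, so within each state there is no distribution shift between the accumulated calibration scores $\gS_z$ and the test point. The calibration scores and the test score for mode $z$ are then exchangeable, and the split-conformal argument underlying Proposition~\ref{prop:exchangable} (with the $\{\infty\}$-augmentation in \eqref{eq:region} handling quantile quantization) gives $\mathbb{E}[err_t]=\alpha$ at that step. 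Because the final interval is formed by the correctly-weighted level set \eqref{eq:aggregation-average}, aggregation preserves this marginal, so $\mathbb{E}[err_t]=\alpha$ holds for each $t$. I would close by observing that this is exactly the finite-sample coverage of ordinary split conformal prediction, i.e.\ CPTC incurs no loss relative to the non-adaptive optimum --- the asserted ``optimal performance of online conformal prediction.''

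The main obstacle I anticipate is the very first step: pinning down the precise sense in which ``accurate state probabilities'' forces $\epsilon=0$, since matching the conditional distribution of $z_t$ is not literally the same as almost-surely predicting the realized mode. One either invokes identifiability of $z_t$ from the history, or reroutes the argument so that the relevant quantity becomes the agreement of the aggregation weights with the true conditional rather than a hard label match. Everything downstream is a one-line substitution plus a reuse of the already-established within-state exchangeability.
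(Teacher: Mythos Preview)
Your approach matches the paper's: the corollary is presented there as an immediate consequence of Theorem~\ref{thm:imperfect_state_predictions} by setting $\epsilon=0$ in the bound, with no separate proof given. You are in fact more careful than the paper---which simply asserts that accurate state probabilities force $\epsilon=0$ without addressing the identifiability subtlety you correctly flag---and your per-timestep refinement is a valid elaboration the paper leaves implicit.
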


\vspace{0.5em}

\begin{corollary}[Asymptotic calibration under eventually correct  State Prediction]
If the predicted state probabilities \( \hat{p}(z_t | x_{1:t-1}) \) converge in probability to the true state distribution \( p(z_t | x_{1:t-1}) \) as \( T \to \infty \). With \( T \to 0\), \( \epsilon \to 0 \) as well. Hence, the overall miscoverage rate converges asymptotically to 0.
\end{corollary}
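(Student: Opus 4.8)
The plan is to derive the corollary directly from Theorem~\ref{thm:imperfect_state_predictions} once we show that the misclassification factor on its right-hand side vanishes. That theorem already gives
\[
\left| \frac{1}{T} \sum_{t=1}^{T} \mathbb{E}[err_t] - \alpha \right| \;\le\; \epsilon \cdot \max_z \delta_{z,T},
\]
and since each $err_t$ is a $\{0,1\}$ indicator we may always take $\delta_{z,T} \le \max(\alpha, 1-\alpha) < 1$, so $\max_z \delta_{z,T}$ is uniformly bounded (it moreover tends to $0$ under Assumption~\ref{assump:stationary_distribution}, though boundedness is all we use here). Hence the right-hand side is controlled by $\epsilon$ alone, and it suffices to prove $\epsilon \to 0$ as $T \to \infty$.

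First I would make $\epsilon$ an explicit time average: a minor refinement of the proof of Theorem~\ref{thm:imperfect_state_predictions} — partition $\{1,\dots,T\}$ into $\mathcal{C} = \{t : \hat{z}_t = z_t\}$ and $\mathcal{I} = \{t : \hat{z}_t \neq z_t\}$ and bound $\sum_{t} |\mathbb{E}[err_t] - \alpha|$ term by term — replaces $\epsilon$ by $\frac{1}{T}\sum_{t=1}^{T}\epsilon_t$ with $\epsilon_t := \mathbb{P}(\hat{z}_t \neq z_t)$, matching the per-step convergence hypothesis. Next I would bound each $\epsilon_t$ by the expected total-variation distance between the predicted and true conditional state laws: sampling $\hat{z}_t$ and $z_t$ from a maximal coupling of $\hat{p}(\cdot \mid x_{1:t-1})$ and $p(\cdot \mid x_{1:t-1})$ yields
\[
\epsilon_t \;\le\; \mathbb{E}\!\left[ \mathrm{TV}\big(\hat{p}(\cdot \mid x_{1:t-1}),\, p(\cdot \mid x_{1:t-1})\big) \right] \;=\; \mathbb{E}\!\left[ \tfrac{1}{2}\sum_{z \in \mathcal{Z}} \big| \hat{p}(z \mid x_{1:t-1}) - p(z \mid x_{1:t-1}) \big| \right].
\]
Because $\mathcal{Z}$ is finite and each summand lies in $[0,1]$, convergence in probability of $\hat{p}$ to $p$ drives this sum (hence its bounded expectation) to $0$, so $\epsilon_t \to 0$; a Ces\`aro argument then gives $\frac{1}{T}\sum_{t=1}^{T}\epsilon_t \to 0$, i.e.\ $\epsilon \to 0$, and the displayed bound forces $\big| \frac{1}{T}\sum_{t=1}^{T}\mathbb{E}[err_t] - \alpha \big| \to 0$, which is the claim.

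I expect the main obstacle to be the step turning ``$\hat{p} \to p$ in probability'' into ``$\mathbb{P}(\hat{z}_t \neq z_t) \to 0$'': convergence of the probability vectors by itself does not make two sampled labels agree, so the maximal coupling between $\hat{z}_t$ and $z_t$ is essential, and one must commit to the convention — already implicit in the preceding Corollary's claim that $\hat{p} = p$ yields $\epsilon = 0$ — that the sampled state is coupled to the true state (rather than drawn independently of it), equivalently that the conditional state law is identifiable, so that irreducible posterior uncertainty does not obstruct $\epsilon \to 0$. Once that modeling choice is fixed, the coupling/total-variation inequality, the passage from convergence in probability to convergence of uniformly bounded expectations, and the Ces\`aro averaging are all routine.
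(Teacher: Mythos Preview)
Your proposal is correct and, in fact, considerably more careful than what the paper does. In the paper this corollary is not given a separate proof at all: the entire argument is the one-line clause embedded in the statement itself, namely that convergence of $\hat{p}(z_t\mid x_{1:t-1})$ to $p(z_t\mid x_{1:t-1})$ forces $\epsilon \to 0$, after which the bound of Theorem~\ref{thm:imperfect_state_predictions} yields the conclusion. No coupling, total-variation, or Ces\`aro step is made explicit.

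What your write-up adds is precisely the justification of the step the paper treats as self-evident. You correctly observe that closeness of the two conditional distributions does not by itself make the \emph{sampled} labels $\hat{z}_t$ and $z_t$ coincide, and you supply the missing ingredient via a maximal coupling, which turns distributional convergence into $\epsilon_t \to 0$; the refinement to a time-averaged $\epsilon$ and the Ces\`aro argument are likewise absent from the paper but needed for a clean limit statement. You also identify (and resolve by convention) the identifiability issue that already lurks behind the preceding corollary's claim that $\hat{p}=p$ implies $\epsilon=0$: the paper implicitly assumes the sampled state can be coupled to the latent one, and you make that assumption explicit. So your route is the same as the paper's in spirit --- invoke Theorem~\ref{thm:imperfect_state_predictions} and argue $\epsilon \to 0$ --- but you fill in the analytic details the paper omits.
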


\subsection{Proof of Theorem \ref{thm:adaptation}}

Consider a data stream with a distribution shift occurring at time $t_{{\text{shift}}}$. Let $\alpha^*_{j}$ denote the optimal target error rate for mode $j$ after the shift, and assume that the predicted state $\hat{z}_t$ correctly reflects the shift: $t_{\text{shift}}$ and $\alpha^*_{z} = \alpha^*_{j}$ for $t > t_{{\text{shift}}}$. For conciseness of notation in Theorem \ref{thm:adaptation}, let $\delta_{j,T}$ and $\delta_{ACI,T}$ denote the miscoverage rate \textit{starting from } $t_{\text{shift}}$ for CPTC and ACI respectively.

\begin{theorem*}[\ref{thm:adaptation}
Miscoverage Ratio under State-Coincident Distribution Shift]
Under a state shift from $i$ to $j$ at time step $t_{\text{shift}}$, coinciding with predicted transition, the CPTC algorithm achieves faster convergence to the new target $\alpha^*_{j}$ compared to non-state-aware algorithm ACI at a ratio of:
\[
\frac{\delta_{j,T}}{\delta_{\text{ACI},T}} \leq \frac{|\alpha_{j, t_{\text{shift}}-1}  - \alpha^{*}_{j}|}{|\alpha_{ t_{\text{shift}}-1} - \alpha^{*}_{j}|}
\]
\end{theorem*}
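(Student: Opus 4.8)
The plan is to compare the two adaptive schemes by viewing both as running the same ACI update rule $\alpha_{t+1} \leftarrow \alpha_t + \gamma(\alpha - \mathrm{err}_t)$, but starting from different points at time $t_{\text{shift}}-1$: CPTC uses the state-specific iterate $\alpha_{j,t_{\text{shift}}-1}$ (which has already been adapting on prior visits to mode $j$), while ACI uses the global iterate $\alpha_{t_{\text{shift}}-1}$. The key observation is that after the shift, both schemes face the \emph{same} optimal target $\alpha^*_j$ and the \emph{same} miscoverage response function $M(\cdot)$ (since, by hypothesis, the predicted state coincides with the shift, so all timesteps $t > t_{\text{shift}}$ with data from mode $j$ get routed to the mode-$j$ calibration set for CPTC). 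So the two runs differ only in their initial condition. I would therefore invoke Lemma~\ref{lemma:aci_within_states} applied to the window starting at $t_{\text{shift}}$: it gives, for a run of length $n = T - t_{\text{shift}} + 1$ starting from initial value $\alpha_{\text{init}}$,
\[
\delta_{\cdot,T} \;=\; \frac{1-(1-c\gamma)^{n}}{n\gamma}\,\bigl|\alpha_{\text{init}} - \alpha^*_j\bigr|.
\]
Applying this with $\alpha_{\text{init}} = \alpha_{j,t_{\text{shift}}-1}$ for CPTC and $\alpha_{\text{init}} = \alpha_{t_{\text{shift}}-1}$ for ACI, the common prefactor $\frac{1-(1-c\gamma)^{n}}{n\gamma}$ cancels in the ratio, leaving exactly the claimed bound $\delta_{j,T}/\delta_{\text{ACI},T} \le |\alpha_{j,t_{\text{shift}}-1} - \alpha^*_j| / |\alpha_{t_{\text{shift}}-1} - \alpha^*_j|$.

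The steps, in order: (1) fix the post-shift window $[t_{\text{shift}}, T]$ and note that under the coincident-transition hypothesis, the subsequence of timesteps in this window that belong to mode $j$ is exactly $\mathcal{T}_j$ restricted to this window, and CPTC's mode-$j$ conformal scores and $\alpha_{j,t}$ are updated only on these timesteps; (2) argue that on this window CPTC's mode-$j$ sub-instance is literally an instance of ACI with initialization $\alpha_{j,t_{\text{shift}}-1}$ against target $\alpha^*_j$, so Lemma~\ref{lemma:aci_within_states} applies verbatim; (3) likewise ACI on the same window is ACI with initialization $\alpha_{t_{\text{shift}}-1}$ against the same target $\alpha^*_j$ (here we use that ACI, being state-unaware, does not reset at $t_{\text{shift}}$); (4) take the ratio of the two $\delta$ expressions from the lemma and cancel the shared geometric prefactor. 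Finally I would remark, as the paper does, that the numerator $\to 0$ as $T \to \infty$ because repeated prior visits to mode $j$ drive $\alpha_{j,t_{\text{shift}}-1}$ toward $\alpha^*_j$, whereas the denominator stays bounded away from zero because the global ACI iterate, having averaged over the pre-shift mode $i$, sits at a different equilibrium.

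The main obstacle is justifying step (3) and step (2) cleanly: the geometric-contraction form in Lemma~\ref{lemma:aci_within_states} is stated for the idealized setting with a single fixed optimal target and $\mathbb{E}[\mathrm{err}_t \mid \alpha_t] = M(\alpha_t)$, so I need the post-shift regime to genuinely be stationary (no further shifts, the mode-$j$ calibration distribution settled) for both schemes to inherit that contraction with the \emph{same} constants $c$ and $\gamma$. A subtlety is that CPTC's mode-$j$ instance only advances its clock on the subset of window timesteps assigned to $j$, so "length $n$" means $|\mathcal{T}_j \cap [t_{\text{shift}},T]|$ for CPTC versus the full $T - t_{\text{shift}}+1$ for ACI; if these differ, the prefactors do not literally cancel and one must argue the CPTC prefactor is no larger (it is, since $\frac{1-(1-c\gamma)^n}{n\gamma}$ is decreasing in $n$ and CPTC accumulates miscoverage only over its own — possibly shorter — effective horizon, which still only helps the inequality direction). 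I would handle this by stating the bound in terms of the appropriate effective horizons and noting the monotonicity makes the stated ratio an upper bound; a reader wanting the cleanest statement can take the mode-$j$ occupancy fraction to be a positive constant so the horizons are comparable up to that constant.
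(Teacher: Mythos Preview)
Your proposal is correct and follows essentially the same route as the paper: invoke Lemma~\ref{lemma:aci_within_states} for both the CPTC mode-$j$ sub-instance and global ACI on the post-shift window with their respective initial values $\alpha_{j,t_{\text{shift}}-1}$ and $\alpha_{t_{\text{shift}}-1}$, then take the ratio so that the common prefactor cancels. If anything, your treatment is more careful than the paper's, which simply writes the prefactor as $c/\gamma$ and cancels without addressing the effective-horizon discrepancy you flag in your final paragraph.
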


\begin{proof}[Proof of Theorem \ref{thm:adaptation}]

We want to compare the convergence behavior of CPTC and a non-segmented online conformal prediction algorithm (e.g., ACI) when a distribution shift occurs at time $t_{\text{shift}}$ and this shift coincides with a predicted state transition from $i$ to $j$. 

We assume that State prediction coincident shift, meaning $\alpha_z^* = \alpha_i^*$ for $t\leq t_{\text{shift}}$ and $\alpha_z^* = \alpha^*_j$ for $t> t_{\text{shift}}$.

From Lemma \ref{lemma:aci_within_states} and Theorem \ref{thm:asymptotic_validity}, we have bounds on these deviations. Taking the ratio of the bounds, we have:

\[
\frac{\delta_{j,T}}{\delta_{\text{ACI},T}} \leq \frac{c \cdot \frac{1}{\gamma} |\alpha_{j, t_{\text{shift}}-1}  - \alpha^{*}_{j}|}{c \cdot \frac{1}{\gamma} |\alpha_{ t_{\text{shift}}-1} - \alpha^{*}_{j}|} =  \frac{|\alpha_{j, t_{\text{shift}}-1}  - \alpha^{*}_{j}|}{|\alpha_{ t_{\text{shift}}-1} - \alpha^{*}_{j}|}
\]


\end{proof}

We elaborate on the implication of this result. The ratio indicates that the CPTC achieves lower miscoverage rate when the initial target $\alpha_{j, t_{\text{shift}}-1}$ for state $j$ in CPTC is closer to the new optimal target $\alpha^*_j$ than the global target $\alpha_{t_{\text{shift}}-1}$ used by ACI. This is because CPTC maintains separate targets for each state, allowing it to better track state-specific optimal targets. The base model takes around 2 hours of training time on the GPU

\newpage
\section{Experiment Details and additional results}
\label{app:exp}

\subsection{Hyperparameters for the base forecasting model}
\label{app:redsds}

We train our RED-SDS on the synthetic datasets, and use the model checkpoints provided by the authors for the real-world datasets. The model architecture consists of a discrete switching component with $K \in \{2,3\}$ categories and a continuous state space with dimensionality $d_x \in \{2,4\}$. For training, we employ the ELBOv2 objective with a learning rate $\eta \in [5\times10^{-3}, 7\times10^{-3}]$, warmup steps of 1000, and gradient clipping at 10.0. The model uses a batch size $B \in \{32,50\}$ and is trained for $T \in \{20,000, 30,000\}$ steps. The continuous transition and emission models are parameterized by nonlinear MLPs with hidden dimensions $h=32$, while the inference network uses either a bidirectional RNN or transformer with embedding dimensions $d_e=4$.  For real-world datasets, we apply target transformation and Jacobian correction, while synthetic datasets use raw observations. The model's capacity is controlled through weight decay $\lambda=10^{-5}$ and MLP hidden dimensions $h \in \{8,32,64\}$ depending on the component. For forecasting, the model is trained to forecast a window of $t = 50$ time steps for synthetic datasets (bouncing ball) and a windows of for real-world datasets (electricity, traffic) as specified by their respective metadata. The model generates $N=100$ $(x_t, y_t, z_t)_{t=0}^k$ triplet trajectories from the trained model by Monte Carlo sampling, and use the samples to calculate prediction quantiles for table 1, in accordance to the original paper.s. We segment all datasets by a 70/10/20 train/validation/test split. Conforaml prediction results are reported on the test set only.

\subsection{Computational Resources}

All experiments are done on a server machine with an Nvidia A100 GPU, with some data processing and analytics performed on a Apple Macbook Pro laptop computer with M1 chip. As running time vary greatly depending on implementation and hardware, we provide a rough range of algorithm runtime. Training the RED-SDS base model (1 seed) takes 2-5 hours for our datasets, and inference take around 15-20 minutes for all datasets. Among the conformal prediction baselines, SPCI is the most computationally demanding, taking 8-10 hours for inference on our benchmarks. HopCPT follows, taking around 1 hour. CP, ACI, and our method SPCI do not require extensive inference-time computation, completing all inference within 5 minutes. 

\subsection{Dataset descriptions, continued}

All datasets are visualized in Figure \ref{fig:examples} with their underlying states color coded.

\textbf{Electricity and Traffic.} The Electricity and Traffic dataset are from the UCI machine learning repository \cite{Dua2017UCI} and we use the train/test split from GluonTS \cite{alexandrov2020gluonts}. The electricity dataset consists of hourly electricity consumption data for 370 customers, each with 5,833 observations. The traffic dataset consists of hourly occupancy rates from 963 road sensors, each with 4,000 data points.

\textbf{Dancing Bees.} The honey bee trajectory dataset is from \citep{oh2008learning}. Honey bees convey information about the location and distance to a food source through a dance carried out inside the hive. This dance consists of three distinct phases: “left turn”, “right turn”, and “waggle”. The duration and orientation during the waggle phase represent the distance and direction to the food source. Experiments are conducted on 6 trajectories, with lengths of 1058, 1125, 1054, 757, 609, and 814 frames, respectively. In this paper, the prediction interval is on the first two dimensions (the coordinates of the bee) only, but all dimensions are used for prediction. 

\begin{figure}[tbh]
    \centering
    \includegraphics[width=0.23\linewidth]{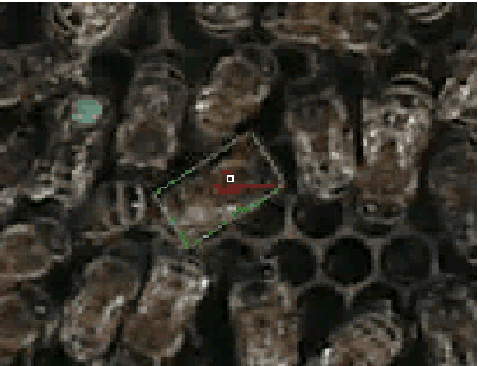} \hspace{2em}
    \includegraphics[width=0.27\linewidth]{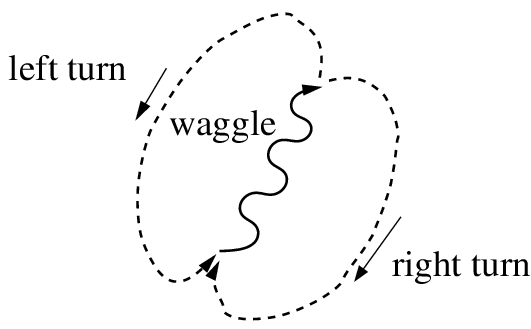}
\caption{Dancer bees are tracked by a appearance based tracker from video sequences. The tracked bee is shown in green rectangle in the left figure above. The right figure shows a stylized bee dance through which bees talk to the other bees about the orientation and distance to the food sources.}
\label{fig:bees}
\end{figure}

\begin{figure}[H]
    \centering
    \includegraphics[width=0.85\linewidth]{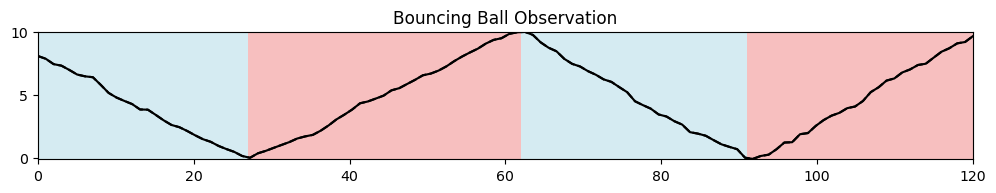}
    \includegraphics[width=0.85\linewidth]{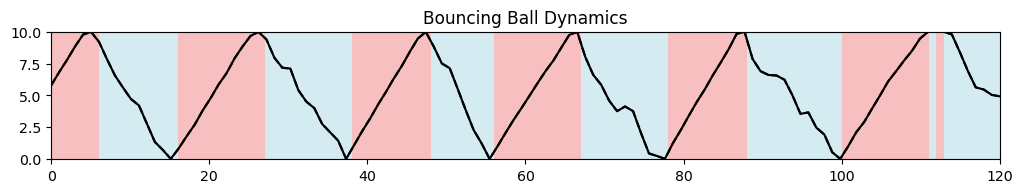}
    \includegraphics[width=0.85\linewidth]{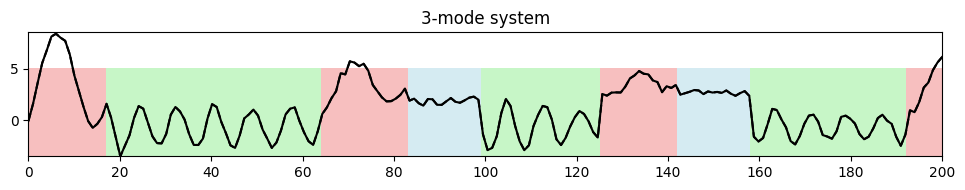}
    \includegraphics[width=0.85\linewidth]{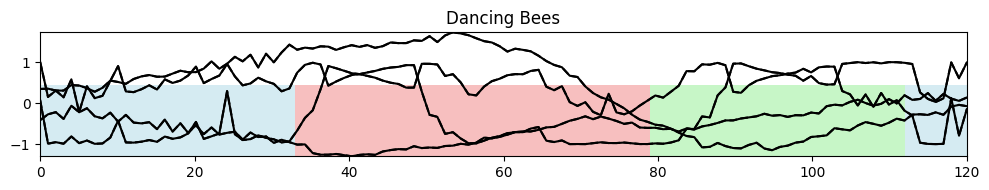}
    \includegraphics[width=0.85\linewidth]{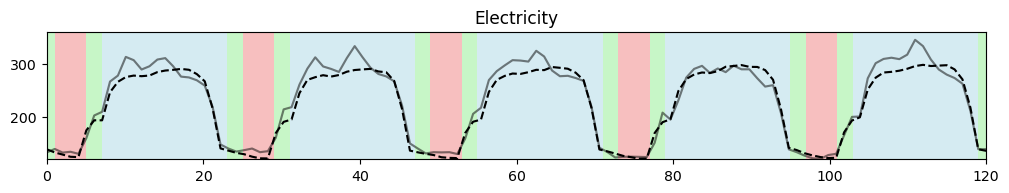}
    \includegraphics[width=0.85\linewidth]{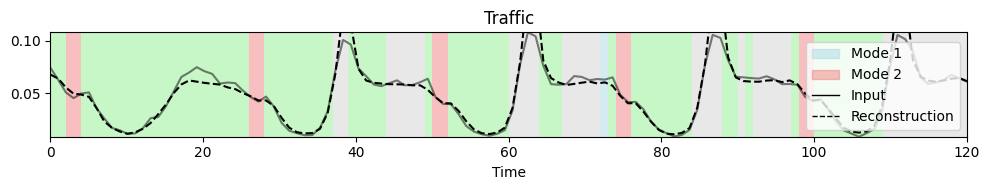}
\caption{Examples of datasets and prediction results. Dotted line represents prediction whereas solid lines represents ground truth; background colors
represent the different operating modes segmented by our base predictor RED-SDS.}
\label{fig:examples}
\end{figure}

\newpage
\subsection{Additional Experiments}
\label{app:more_experiments}

\paragraph{Additional Baselines} We add comparison to three more baselines that handles distribution shift in time series data. They are DtACI \cite{angelopoulos2024online},  ECI \cite{wu2025error}, and L-ARC \cite{zecchin2024localized}.

We would like to emphasize that these baselines are reactive. The question we try to address with CPTC is - what happens when we can anticipate distribution shift? CPTC is designed such that it greatly improves coverage compared to purely reactive methods, while still maintaining the ability to adapt to unknown shifts. To our knowledge, there are no other models that explore the same problem; CPTC provides practitioners with a simple and sound solution, when they have the ability (such as via a state space model) to predict shifts.

\begin{table}[h]
\caption{Performance on synthetic and real-world datasets with target confidence $1-\alpha = 0.9$ (for horizon $T=200$ for synthetic datasets, $300$ for electricity and traffic, and $60$ for bee, mean $\pm$ standard deviation of the test samples).  Methods that are \textit{invalid} (coverage below $90\%$) are grayed out. Our method achieves a high level of calibration (coverage is close to $90\%$) consistently. }
\vspace{1em}
\label{tab:baselines}
\centering
\resizebox{0.8\textwidth}{!}{%
\begin{tabular}{ l | c | c  c c c } 
\toprule
& & DtACI & ECI-cutoff & L-ARC & Ours \\ 
\midrule\midrule
\multirow{2}{*}{\begin{tabular}{l}Bouncing  \\ Ball obs.\end{tabular}} & Cov & 92.45 \plusminus 1.51 & \color{gray}{49.80 \plusminus 9.14} & \color{gray}{80.98 \plusminus 3.65} & 90.15 \plusminus 1.19\\ 
 & Width & 1.06 \plusminus 0.40 & \color{gray}{6.54 \plusminus 2.91} & \color{gray}{1.41 \plusminus 0.31} & 3.71 \plusminus 0.98\\ 
\midrule
\multirow{2}{*}{\begin{tabular}{l}Bouncing  \\ Ball dyn.\end{tabular}} & Cov & 92.54 \plusminus 1.14 & \color{gray}{55.71 \plusminus 8.03} & \color{gray}{81.66 \plusminus 3.78} & 90.47 \plusminus 2.30\\ 
 & Width & 1.18 \plusminus 0.33 & \color{gray}{7.53 \plusminus 3.16} & \color{gray}{1.47 \plusminus 0.21} & 1.76 \plusminus 0.71\\ 
\midrule
\multirow{2}{*}{\begin{tabular}{l}3-Mode \\ System\end{tabular}} & Cov & \color{gray}{89.21 \plusminus 1.25} & \color{gray}{59.27 \plusminus 3.74} & \color{gray}{80.54 \plusminus 4.02} & 94.96 \plusminus 1.96\\ 
 & Width & \color{gray}{0.06 \plusminus 0.03} & \color{gray}{0.97 \plusminus 0.27} & \color{gray}{0.25 \plusminus 0.07} & 2.45 \plusminus 0.72\\ 
\midrule
\midrule
\multirow{2}{*}{\begin{tabular}{l}Traffic\end{tabular}} & Cov & \color{gray}{87.68 \plusminus 1.28} & \color{gray}{68.83 \plusminus 3.10} & \color{gray}{82.89 \plusminus 2.43} & 92.38 \plusminus 1.24\\ 
 & Width & \color{gray}{0.71 \plusminus 1.92} & \color{gray}{12.53 \plusminus 5.09} & \color{gray}{90.35 \plusminus 21.44} & 7.92 \plusminus 2.98\\ 
\midrule
\multirow{2}{*}{\begin{tabular}{l}Electricity\end{tabular}} & Cov & 95.63 \plusminus 2.67 & \color{gray}{65.00 \plusminus 9.99} & \color{gray}{82.00 \plusminus 3.31} & 91.22 \plusminus 1.29\\ 
 & Width & 193.25 \plusminus 80.19 & \color{gray}{106.25 \plusminus 159.82} & \color{gray}{83.42 \plusminus 20.53} & 139.75 \plusminus 620.44\\ 
\midrule
\multirow{2}{*}{\begin{tabular}{l}Dancing \\ Bees\end{tabular}} & Cov & \color{gray}{84.52 \plusminus 2.98} & \color{gray}{45.32 \plusminus 8.49} & \color{gray}{70.52 \plusminus 5.33} & 92.64 \plusminus 3.19\\
 & Width & \color{gray}{0.03 \plusminus 0.01} & \color{gray}{0.08 \plusminus 0.01} & \color{gray}{0.22 \plusminus 0.09} & 0.79 \plusminus 0.27\\
\bottomrule
\end{tabular}
}
\end{table}

\paragraph{Longer horizon Data} Result for the same set of experiments performed on longer horizon  ($T=10,000$ for synthetic datasets amd $T=4,000$ for Electricity and Traffic datasets) is presented in table \ref{tab:baselines_long_horizon}. We added the setting where the added noise to the bouncing ball dataset is varying over the course of the time series, which is more challenging than the original settings. In the longer horizon case, both SPCI and HopCPT achieve their state of the art performance. Our method remains valid yet achieves less sharp interval width. 

CPTC offers distinct advantages over SPCI and HopCPT in data-limited regimes and applications requiring rapid deployment. While SPCI and HopCPT achieve excellent long-term performance on extensive time series (T $\geq$ 10,000) by learning residual patterns through quantile regression and Hopfield networks respectively, they require substantial data to train these models effectively during inference. Our results demonstrate that CPTC significantly outperforms these methods on shorter horizons (T $\leq$ 300), where insufficient data prevents SPCI and HopCPT from learning meaningful temporal correlations or residual patterns. 

\begin{table}[H]
\caption{Longer time horizon ($T=10,000$ for synthetic datasets amd $T=4,000$ for Electricity and Traffic datasets). Performance in synthetic and real-world datasets with target confidence $1-\alpha = 0.9$. Methods that are \textit{invalid} (coverage below $90\%$) are grayed out. Our method achieves a high level of calibration (coverage is close to $90\%$) consistently.} \vspace{0.5em}
\label{tab:baselines_long_horizon}
\centering
\resizebox{\textwidth}{!}{%
\begin{tabular}{ l | c | c  c c c c c }\toprule
& & RED-SDS & CP & ACI & SPCI  & HopCPT & Ours \\
\midrule\midrule
\multirow{2}{*}{\begin{tabular}{l}Bouncing Ball \\ obs.\end{tabular}} & Cov & \color{gray}{18.53 \plusminus 7.10}& \color{gray}{65.28 \plusminus 5.89} & 89.25 \plusminus 1.40 & 90.05 \plusminus 0.04 & 90.02 \plusminus 0.02 & 90.53 \plusminus 0.23\\
 & Width & \color{gray}{1.65 \plusminus 0.11}& \color{gray}{2.65 \plusminus 0.85}& 3.53 \plusminus 0.81& 1.95 \plusminus 0.12 & 1.99 \plusminus 0.08 & 2.25 \plusminus 0.54\\
\midrule
\multirow{2}{*}{\begin{tabular}{l}Bouncing Ball \\ dyn.\end{tabular}} & Cov & \color{gray}{15.12 \plusminus 15.50}& \color{gray}{42.88 \plusminus 6.95}& \color{gray}{87.95 \plusminus 1.05}& 90.04 \plusminus 0.04 & 90.01 \plusminus 0.04 & 90.05 \plusminus 0.04\\
 & Width & \color{gray}{1.67 \plusminus 0.13}& \color{gray}{0.95 \plusminus 0.51}& \color{gray}{3.10 \plusminus 1.04}& 3.24  \plusminus 0.14 & 3.07 \plusminus 0.10 & 3.18 \plusminus 0.91\\
\midrule
\multirow{2}{*}{\begin{tabular}{l}3-Mode \\ System\end{tabular}} & Cov & \color{gray}{68.52 \plusminus 3.53}& \color{gray}{65.24 \plusminus 5.51}& 89.75 \plusminus 0.90& 90.05 \plusminus 0.02 & 90.02 \plusminus 0.04 & 90.08 \plusminus 0.05\\
 & Width & \color{gray}{2.56 \plusminus 0.21}& \color{gray}{2.33 \plusminus 0.63}& 3.74 \plusminus 0.61& 2.67 \plusminus 0.94 & 2.49 \plusminus 1.31 &  3.12 \plusminus 2.63\\
\midrule
\midrule
\multirow{2}{*}{\begin{tabular}{l}Bouncing Ball \\ obs. varying\end{tabular}} & Cov & \color{gray}{10.83 \plusminus 22.53}& \color{gray}{40.25 \plusminus 6.52}& 89.60 \plusminus 1.25& 90.02 \plusminus 0.04 & 90.04 \plusminus 0.00 & 90.10 \plusminus 0.03\\
 & Width & \color{gray}{2.10 \plusminus 0.45}& \color{gray}{1.73 \plusminus 0.91}& 3.70 \plusminus 0.95 & 2.11 \plusminus 0.82 & 2.14 \plusminus 0.73 & 2.41 \plusminus 1.22\\
\midrule
\multirow{2}{*}{\begin{tabular}{l}Bouncing Ball \\ dyn. varying\end{tabular}} & Cov & \color{gray}{14.25 \plusminus 16.03}& \color{gray}{41.73 \plusminus 7.12}& \color{gray}{88.05 \plusminus 1.00}& 90.05 \plusminus 0.03 & 90.02 \plusminus 0.04 & 90.08 \plusminus 0.03\\
 & Width & \color{gray}{1.85 \plusminus 0.12}& \color{gray}{0.92 \plusminus 0.55}& \color{gray}{3.13 \plusminus 1.14}& 2.92 \plusminus 0.21 & 3.05 \plusminus 0.19 & 3.40 \plusminus 0.32\\
\midrule
\midrule
\multirow{2}{*}{\begin{tabular}{l}Traffic\end{tabular}} & Cov & \color{gray}{87.14 \plusminus 10.51}& \color{gray}{70.52 \plusminus 2.83}& \color{gray}{89.15 \plusminus 0.38}& 90.01 \plusminus 0.04 & 90.03 \plusminus 0.03 & 90.02 \plusminus 0.03\\
 & Width & \color{gray}{2.91 \plusminus 1.62}& \color{gray}{0.43 \plusminus 0.25}& \color{gray}{55.10 \plusminus 24.13}& 4.81 \plusminus 2.42 & 7.84 \plusminus 10.53 & 8.55 \plusminus 14.11\\
\midrule
\multirow{2}{*}{\begin{tabular}{l}Electricity\end{tabular}} & Cov & \color{gray}{76.10 \plusminus 13.52}& \color{gray}{69.53 \plusminus 3.21}& \color{gray}{89.05 \plusminus 0.33}& 90.02 \plusminus 0.03 & 90.00 \plusminus 0.02 & 90.43 \plusminus 0.03\\
 & Width & \color{gray}{91.30 \plusminus 125.10}& \color{gray}{62.10 \plusminus 451.20}& \color{gray}{31.40 \plusminus 171.30}& 162.02 \plusminus 11.40 & 161.70 \plusminus 15.20 & 173.38 \plusminus 31.10 \\
\bottomrule
\end{tabular}
}
\end{table}

\paragraph{Aggregation methods.} In this study we compare the two different aggregation methods. First is the case where we chose the minimum size, i.e. $\Gamma_t(x_t) = $ \Eqref{eq:aggregation-average}. Algorithmically, we discretize the output space $\mathcal{Y}$ into intervals of length 0.02, and calculate probability mass for each interval. We refer to this method as \emph{min}. Aggregation by union numbers (the same as in the main text using \Eqref{eq:aggregation-union}) are referred to as \emph{union}. Results are present in table \ref{tab:aggregation} where the performance are very similar. When there are two modes (bouncing ball cases) the prediction intervals are almost identical.
 
\begin{table}[h!]
\centering
\caption{Ablation Study on two different Union (\textit{union}, \eqref{eq:aggregation-union}) and grid-discretization (\textit{min}, \eqref{eq:aggregation-average}).}
\vspace{0.5em}
\label{tab:aggregation}
\begin{tabular}{lcccccc}
\toprule
& BB obs. & BB dyn. & 3-mode system & traffic & electricity & bees \\
\midrule
\midrule
Coverage (\textit{union}) & 90.53 & 90.44 & 93.35 & 92.76 & 92.62 & 93.43 \\
Coverage (\emph{min})  & 90.03 & 90.32 & 91.50 & 91.87 & 89.82 & 90.28 \\
Width (\textit{union})   & 3.73  & 1.72  & 2.45 & 7.94  & 166.88 & 0.79 \\
Width (\emph{min})     & 3.62 & 1.65  & 2.41 & 7.95  & 153.57 & 0.77 \\
\bottomrule
\end{tabular}
\end{table}

\paragraph{Discussion on Misspecified Regimes.} We designed the number of latent states to be a hyperparameter given by the user. Our method ensures that the validity guarantee still holds even if the state cardinality is incorrect (theorem 4.2), as incorrect state cardinality can be formulated as a case of incorrect classification. Determining state space cardinality is a well-studied problem (see for example \cite{branicky1994stability}), and we consider it to be beyond the scope of our paper.

Regarding the effect on performance, our ablation studies (Section 5.2, Tables \ref{tab:z_exp} and \ref{tab:z_exp_area}) demonstrate that while the validity (coverage) of CPTC is robust to errors in state prediction, the sharpness (width of prediction intervals) can be affected. Table 3 explicitly shows that "the more accurate the state prediction is, the sharper the intervals." This implies that selecting a more appropriate number of regimes that better captures the true underlying dynamics would lead to narrower and more efficient prediction intervals.

\subsection{Qualitative Results}
\label{app:pictures}

We provide additional visualizations to present qualitative comparisons of CPTC compared to baseline models. Figures \ref{fig:synthetic}, \ref{fig:pic2}, and \ref{fig:pic3} show consistent results with the analysis in section 5, where CPTC shows stronger adaptivity and validity for uncertainty quantification in time series with these nonstationary dynamics.

\begin{figure}[h!]
    \centering
    \includegraphics[width=\linewidth]{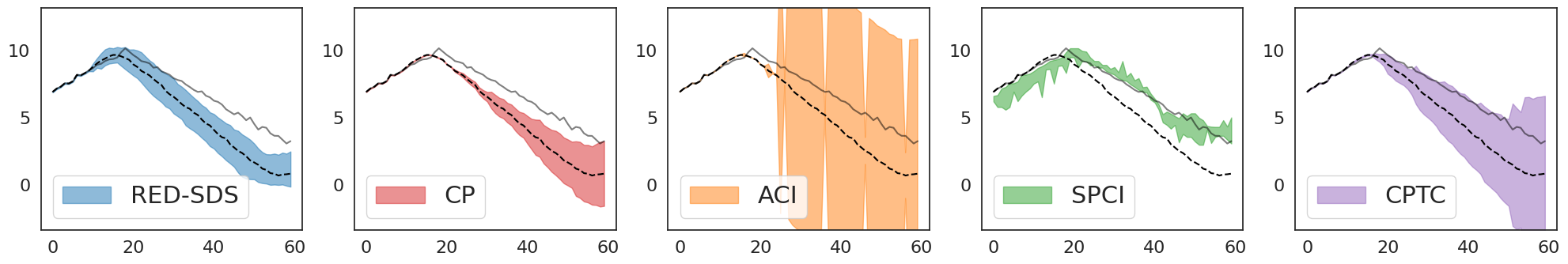}
    \includegraphics[width=\linewidth]{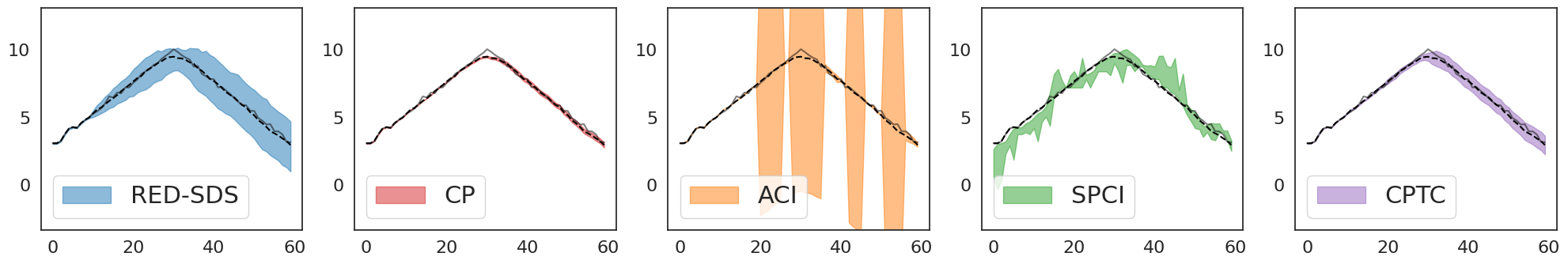}
    \includegraphics[width=\linewidth]{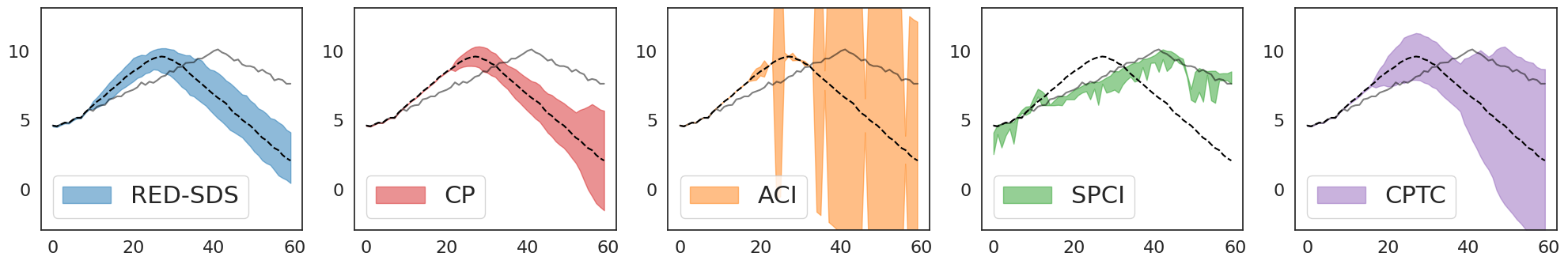}
    \caption{\textbf{Qualitative comparison of the prediction interval on the bouncing ball w/ dynamics noise dataset.} The dashed line is the predicted state whereas the solid gray line is the observation. We see that our CPTC algorithm can increase the interval in anticipation of an uncertain state change (top panel), is stabler when the prediction is accuracy compared to ACI and SPCI (middle panel), and is adaptive to inaccurate predictions (botom panel).}
    \label{fig:synthetic}
\end{figure}

\vspace{-1em}
\begin{figure}[H]
    \centering
    \includegraphics[width=\linewidth]{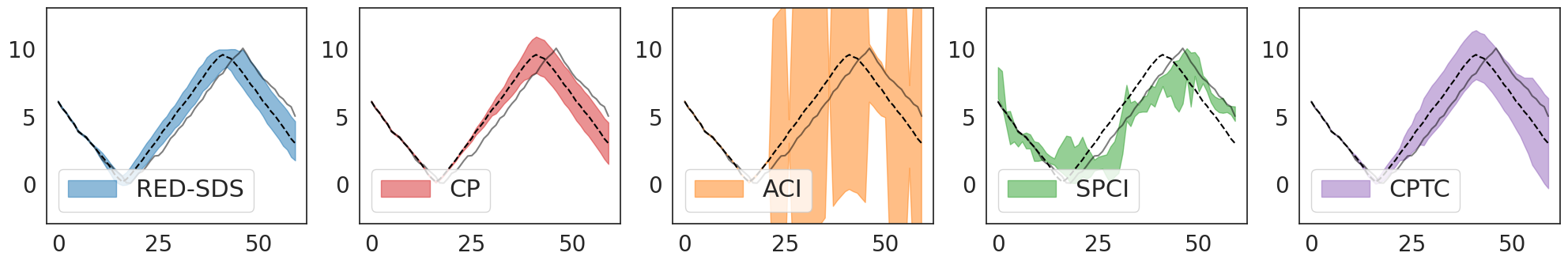}
    \includegraphics[width=\linewidth]{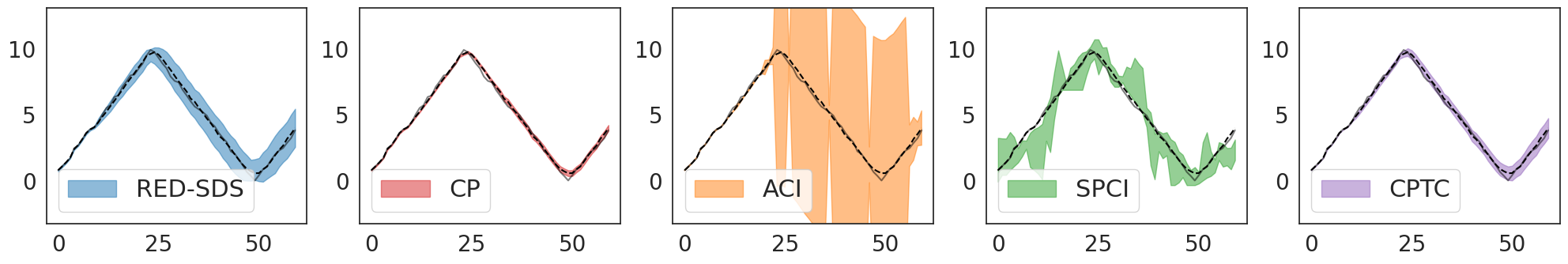}
    \includegraphics[width=\linewidth]{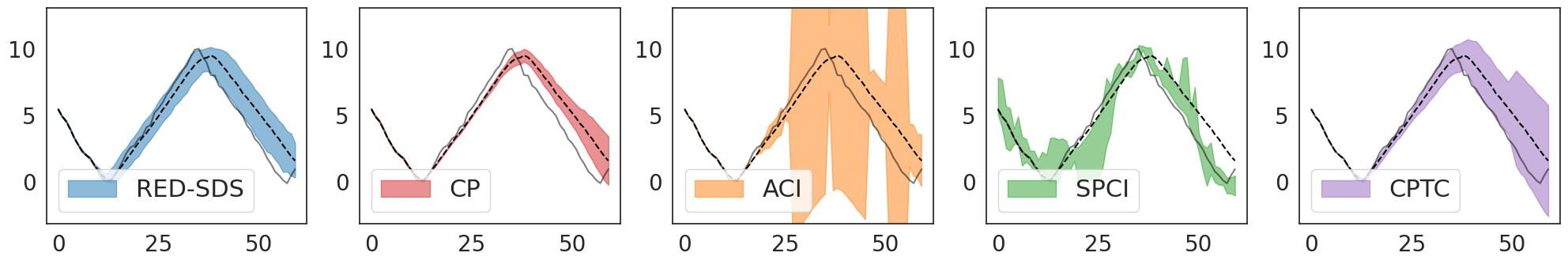}
    \caption{Visualization of prediction intervals on the Bouncing Ball Obs dataset.}
    \label{fig:pic2}
\end{figure}
\vspace{2em}

\begin{figure}[H]
    \centering
    \includegraphics[width=\linewidth]{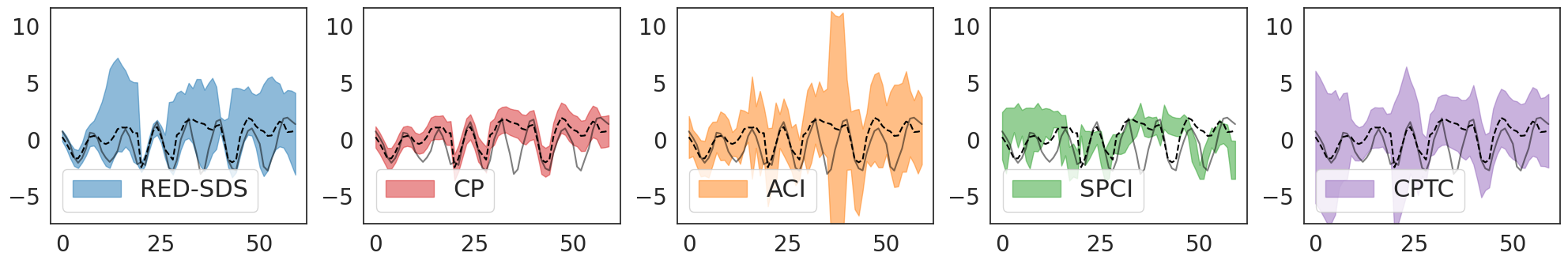}
    \includegraphics[width=\linewidth]{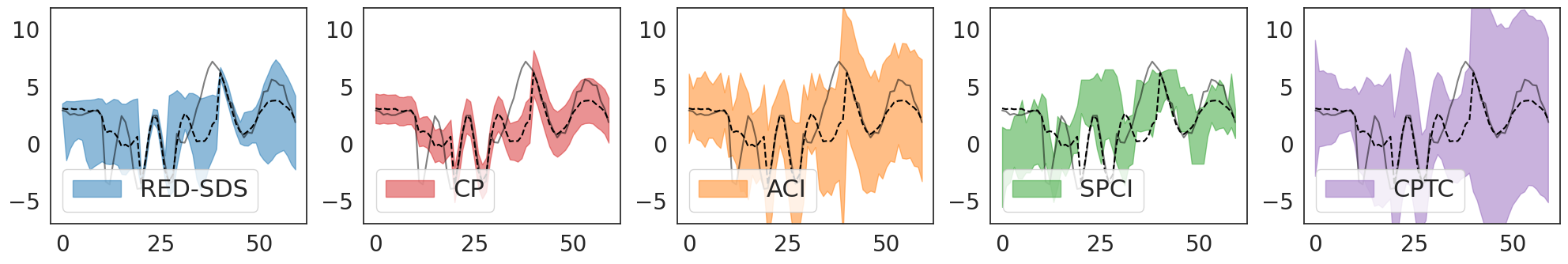}
    \includegraphics[width=\linewidth]{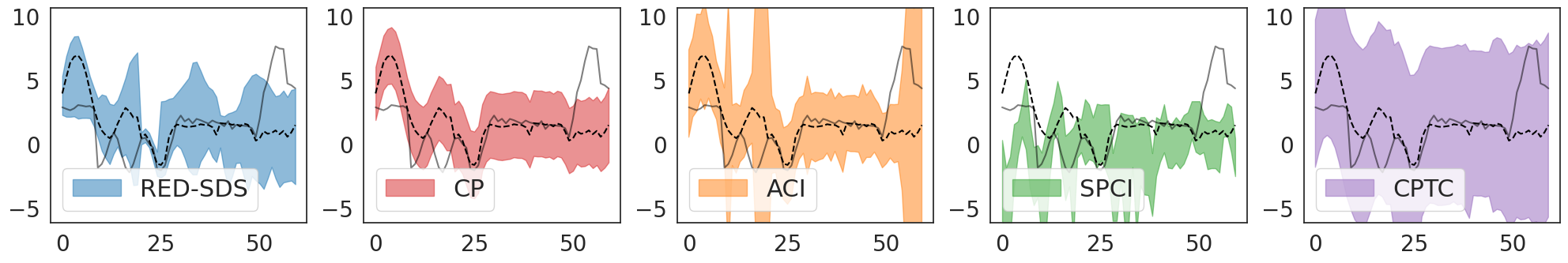}
    \caption{Visualization of prediction intervals on the 3 mode system datasets.}
    \label{fig:pic3}
\end{figure}

\subsection{Runtime analysis}

We provide a runtime analysis comparing the aggregation methods in Equations \ref{eq:aggregation-average} and \ref{eq:aggregation-union} and with baselines.

\textbf{Notation}

\begin{itemize}
    \item $T$: Total timesteps, $K$: Number of states, $M$: State prediction model inference complexity
    \item Score sets $\mathcal{S}_z$ maintained in sorted order (quantile computation in $\mathcal{O}(1)$, insertion $\mathcal{O}(\log T)$)
\end{itemize}

\textbf{Complexity Per Timestep}

\begin{enumerate}
    \item Computing quantile $Q^{1-\alpha_t}(\mathcal{S}_z^{(l)})$ for each state, totaling $\mathcal{O}(K)$.
    \item State Prediction: computing $\hat{p}(z_t|x_t)$ and sampling a state, totaling $\mathcal{O}(M)$.
    \item Aggregation
    \begin{itemize}
        \item \textit{Eq. 10, Weighted Level-Set:} $\mathcal{O}(K \cdot \lceil 1/\delta \rceil^d)$ where $\delta$ is discretization resolution
        \item \textit{Eq. 11, Union:} $\mathcal{O}(K \log K)$ for sorting states by probability
    \end{itemize}
    \item Updates: $\mathcal{O}(\log t)$ for inserting new score into sorted set.
\end{enumerate}

\textit{Therefore, the total complexity of CPTC with horizon $T$ is:}

$\mathcal{O}(T \cdot (M + K \cdot \lceil \frac{1}{\delta} \rceil^d) + T \log T)$ if we use the Weighted Level-Set as aggregation in Eq. 10, and

$\mathcal{O}(T \cdot (M + K \log K) + T \log T)$ if we use the more efficient Union aggregation in Eq. 11.

In practice, both $M$ (inference cost) and $K$ (number of discrete states) are small. For applications where $K = \mathcal{O}(1)$ and $M = \mathcal{O}(1)$, CPTC achieves near-linear scaling $\mathcal{O}(T \log T)$, the same as ACI (see table below).

If we use the more precise weighted level set aggregation method, the runtime largely depends on the discretization resolution $\delta$, output dimension $d$, and the number of states $K$. We remark the level set approach scales poorly if the resolution, dimension, or number of states are high.

\begin{table}[h]
\centering
\begin{tabular}{lll}
\hline
\textbf{Method} & \textbf{Time Complexity} & \textbf{Remarks} \\
\hline
ACI & $\mathcal{O}(T \log T)$ & Single score set, no state awareness \\
SPCI / HopCPT & $\mathcal{O}(T^2 \log T)$ & Re-train regression model at every time step \\
CPTC (Union) & $\mathcal{O}(T \log T)$ & State-aware, pre-trained predictor \\
\hline
\end{tabular}
\end{table}
\end{document}